\theoremstyle{plain}
\newtheorem{theorem}{Theorem}
\newtheorem{proposition}[theorem]{Proposition}
\newtheorem{lemma}[theorem]{Lemma}
\newtheorem{corollary}[theorem]{Corollary}
\theoremstyle{definition}
\newtheorem{definition}{Definition}
\theoremstyle{remark}
\definecolor{Maroon}{RGB}{128, 0, 0}
\definecolor{Violet}{RGB}{127, 0, 255}
\definecolor{PineGreen}{RGB}{1, 121, 111}
\definecolor{RedOrange}{RGB}{100, 33, 29}
\definecolor{BrickRed}{RGB}{203, 65, 84}
\definecolor{BlueViolet}{RGB}{138, 43, 226}
\newcommand{\changeziyu}[1]{{#1}}
\begin{document}

%

%

\twocolumn[

\aistatstitle{Towards Practical Non-Adversarial Distribution Matching}

\aistatsauthor{ Ziyu Gong\And Ben Usman \And Han Zhao}

\aistatsaddress{ Purdue University \\ \texttt{gong123@purdue.edu} \And  Google Research \\\texttt{usmn@google.com} \And University of Illinois Urbana-Champaign\\\texttt{hanzhao@illinois.edu}}

\aistatsauthor{ David I. Inouye}
\aistatsaddress{Purdue University \\ \texttt{dinouye@purdue.edu}} 

]

\renewcommand{\paragraph}[1]{\textbf{#1.} }
\begin{abstract}
Distribution matching can be used to learn invariant representations with applications in fairness and robustness.
Most prior works resort to adversarial matching methods 
but the resulting minimax problems are unstable and challenging to optimize.
Non-adversarial likelihood-based approaches either require model invertibility, impose constraints on the latent prior, or lack a generic framework for distribution matching.
To overcome these limitations, we propose a non-adversarial VAE-based matching method that can be applied to any model pipeline.
We develop a set of alignment upper bounds for distribution matching (including a noisy bound) that have VAE-like objectives but with a different perspective.
We carefully compare our method to prior VAE-based matching approaches both theoretically and empirically.
Finally, we demonstrate that our novel matching losses can replace adversarial losses in standard invariant representation learning pipelines without modifying the original architectures---thereby significantly broadening the applicability of non-adversarial matching methods.
\end{abstract}

\section{INTRODUCTION}\label{sec:intro}

Distribution matching can be used to learn invariant representations that have many applications in robustness, fairness, and causality.
For example, in Domain Adversarial Neural Networks (DANN) \citep{ganin2016domain,zhao2018adversarial}, the key is enforcing an intermediate latent space to be invariant with respect to the domain.
In fair representation learning (e.g., \citet{creager2019flexibly, louizos2015variational, xu2018fairgan,Zhao2020Conditional}), a common approach is to enforce that a latent representation is invariant with respect to a sensitive attribute.
In both of these cases, distribution matching is formulated as a (soft) constraint or regularization on the overall problem that is motivated by the context (either domain adaptation or fairness constraints).
Thus, there is an ever-increasing need for reliable distribution matching methods.

Most prior works of distribution matching resort to adversarial training to implement the required matching constraints.
While adversarial loss terms are easy to implement as they only require a discriminator network, the corresponding minimax optimization problems are unstable and difficult to optimize in practice (see e.g. \citet{lucic2018gans,kurach2019the,pmlr-v119-farnia20a,nie2020towards,NEURIPS2020_3eb46aa5,han2023ffb}) in part because of the competitive nature of the min-max optimization problem.
To reduce the dependence on adversarial learning, \citet{grover2020alignflow} proposed an invertible flow-based method to combine likelihood and adversarial losses under a common framework.
\cite{usman2020log} proposed a completely non-adversarial matching method using invertible flow-based models where one distribution is assumed to be fixed.
\cite{cho2022cooperative} unified these previous non-adversarial flow-based approaches for distribution matching by proving that they are upper bounds of the Jensen-Shannon divergence called the alignment upper bound (AUB).
However, these non-adversarial methods require invertible model pipelines, which significantly limit their applicability in key distribution matching applications.
For example, because invertibility is required, the aligner cannot reduce the dimensionality.
As another consequence, it is impossible to use a shared invertible aligner because JSD is invariant under invertible transformations\changeziyu{\footnote{refer to \autoref{appendix:proof-JSD-invar} for proof}}.
Most importantly, invertible architectures are difficult to design and optimize compared to general neural networks.
Finally, several fairness-oriented works have proposed variational upper bounds \citep{louizos2015variational,moyer2018invariant,gupta2021controllable} for distribution matching via the perspective of mutual information. Some terms in these bounds can be seen as special cases of our proposed bounds (detailed in \autoref{sec:revisiting-vae}; also see comparisons to these methods in \autoref{sec:experiments-comparison-other-bounds}).
However, these prior works impose a fixed prior distribution and are focused on the fairness application, i.e., they do not explore the broader applicability of their matching bounds beyond fairness.

To address these issues, we propose a VAE-based matching method that can be applied to any model pipeline, i.e., it is model-agnostic like adversarial methods, but it is also non-adversarial, i.e., it forms a min-min cooperative problem instead of a min-max problem.
Inspired by the flow-based alignment upper bound (AUB)\citep{cho2022cooperative}, we relax the invertibility of AUB by replacing the flow with a VAE and add a mutual information term that simplifies to a $\beta$-VAE \citep{higgins2017betavae} matching formulation.
From another perspective, our development can be seen as revisiting VAE-based matching bounds where we show that prior works impose an unnecessary constraint caused by using a fixed prior and do not encourage information preservation as our proposed relaxation of AUB.
We then prove novel noisy alignment upper bounds as summarized in \autoref{tab:alignment-bounds}, which may help avoid vanishing gradient and local minimum issues that may exist when using the standard JSD as a divergence measure~\citep{arjovsky2017wasserstein}.\footnote{while the term \textit{alignment} in deep learning is often being interpreted as ``bringing human values and goals into models", for the purposes of maintaining consistency with the terminology employed in the AUB paper, we instead use the term \textit{alignment} interchangeably with the idea of \textit{distribution matching} throughout the rest of the paper.}
This JSD perspective to distribution matching complements and enhances the mutual information perspective in prior works \citep{moyer2018invariant, gupta2021controllable} because the well-known equivalence between mutual information of the observed features and the domain label and the JSD between the domain distributions.
Our contributions can be summarized as follows: 
\begin{itemize}[wide=6pt,leftmargin=*,topsep=0pt,itemsep=-1ex,partopsep=1ex,parsep=1ex]
    \item We relax the invertibility constraint of AUB using VAEs and a mutual information term while ensuring the distribution matching loss is an upper bound of the JSD up to a constant.
    \item We propose noisy JSD and noise-smoothed JSD to help avoid vanishing gradients and local minima during optimization, and we develop the corresponding noisy alignment upper bounds.
    \item We demonstrate that our non-adversarial VAE-based matching losses can replace adversarial losses without any change to the original model's architecture.
    Thus, they can be used within any standard invariant representation learning pipeline such as domain-adversarial neural networks or fair representation learning without modifying the original architectures.
\end{itemize}

\paragraph{Notation} Let $\xvec$ and $\d \in \{1,2,\dots,k\}$ denote the observed variable and the domain label, respectively, where $k > 1$ is the number of domains. Let $\zvec = \g(\xvec| \d)$ denote a deterministic representation function, i.e., an \emph{aligner}, that is invertible w.r.t.\ $\xvec$ conditioned on $\d$ being known.
Let $q$ denote the \emph{encoder} distribution: $q(\xvec,\d,\zvec) = q(\xvec, \d)q(\zvec|\xvec,\d)$, where $q(\xvec,\d)$ is the true data distribution and $q(\zvec|\xvec,\d)$ is the encoder (i.e., probabilistic aligner).
Similarly, let $p$ denote the \emph{decoder} distribution $p(\xvec,\d,\zvec)=p(\zvec)p(\d)p(\xvec|\zvec,\d)$, where $p(\zvec)$ is the shared prior, $p(\xvec|\zvec,\d)$ is the decoder, and $p(\d)=q(\d)$ is the marginal distribution of the domain labels.
Entropy and cross entropy will be denoted as $\HH(\cdot)$, and $\Hc(\cdot,\cdot)$, respectively, where the KL divergence is denoted as $\KL(p,q) = \Hc(p,q) - \HH(p)$.
Jensen-Shannon Divergence (JSD) is denoted as $\JSD(p,q) = \HH(\frac{1}{2}(p+q)) - \frac{1}{2}(\HH(p) + \HH(q))$.
Furthermore, the Generalized JSD (GJSD) extends JSD to multiple distributions \citep{lin1991divergence} and is equivalent to the mutual information between $\zvec$ and $\d$: $\GJSD(\{q(\zvec|\d)\}_{\d=1}^\ndist) \equiv I(\zvec, \d) = \HH(\E_{q(\d)}[q(\zvec|\d)]) - \E_{q(\d)}[\HH(q(\zvec|\d))]$, where $q(\d)$ are the weights for each domain distribution $q(\zvec|\d)$. JSD is recovered if there are two domains and $q(\d)=\frac{1}{2}, \forall d$.

\begin{table*}[!ht]

    \centering
    \caption{Summary of new alignment upper bounds where $C\triangleq -\E_{q(d)}[\HH(q(\xvec|\d))]$ and Flow+JSD is prior work from \cite{cho2022cooperative}. Blue represents changes needed for VAEs, i.e., replacing log Jacobian term and adding $\beta$ term to encourage near-invertibility of encoder, and red highlights that we merely need to add noise to the latent representation before evaluating the shared latent distribution. JSD bounds can be made tight via optimization while noisy JSD bounds can only be made tight if the noise variance goes to 0. 
    }
    \label{tab:alignment-bounds}
    \begin{tabular}{m{2.5cm}|m{6.5cm}|m{6.5cm}}
        Model
        & 
        JSD 
        &
        Noisy JSD 
        \\ \hline
        $\begin{aligned}
        &\textnormal{\textbf{Flow}} \\
        &\zvec=g(\xvec|d)
        \end{aligned}$
        & 
        $\begin{aligned}
        \,\min_{p(\zvec)}\,\,\, \E_{q}[-\log (|J_g(\xvec|d)|\cdot p(\zvec))] \!+\! C
        \end{aligned}$
        &
        $\begin{aligned}
        \,\min_{p(\tilde{\zvec})} \,\,\,\E_q[-\log (|J_g(\xvec|d)| \cdot p(\zvec \textcolor{BrickRed}{+ \epsilon}))] \!+\!C
        \end{aligned}$
        \\ [20pt]
        $\begin{aligned}
        &\textnormal{\textbf{$\beta$-VAE}\, ($\beta \leq 1$)} \\
        &\zvec \sim q(\zvec|\xvec,\d)
        \end{aligned}$
        & 
        $\begin{aligned}
        \!\!\min_{\substack{p(\zvec) \\ \textcolor{BlueViolet}{p(\xvec|\zvec,\d)}}} \E_q\!\!\left[-\log \left(\textcolor{BlueViolet}{\frac{p(\xvec|\zvec, \d)}{q(\zvec |\xvec, \d)^\beta}} \cdot p(\zvec)^{\textcolor{BlueViolet}{\beta}}\right) \right]\!+\! C
        \end{aligned}$
        &
        $\begin{aligned}
        \!\!\min_{\substack{p(\tilde{\zvec}) \\ 
        \textcolor{BlueViolet}{p(\xvec|\zvec,\d)}}} 
        \E_{q}\!\!\left[-\log \left(\textcolor{BlueViolet}{\frac{p(\xvec|\zvec, \d)}{q(\zvec |\xvec, \d)^\beta}} \cdot p(\zvec \textcolor{BrickRed}{+ \epsilon})^{\textcolor{BlueViolet}{\beta}}\right)\right] \!+\! C \\
        \end{aligned}$
        \\ [20pt]
    \end{tabular}
    \vspace{-1em}

\end{table*}

\section{BACKGROUND}

\paragraph{Adversarial Methods} 
Adversarial methods based on GANs \citep{goodfellow2014generative} maximize a \emph{lower bound} on the GJSD using a probabilistic classifier denoted by $f$:
\vspace{0.5em}
\begin{equation}
\begin{aligned}
&\min_\g \left(\max_f \E_{q(\xvec,d)}[\log f_d(\g(\xvec|d))] \right) \notag \\
&=\min_\g \textnormal{ADV}(\g) \leq \min_\g \GJSD(\{q(z|d)\}_{d=1}^k).
\label{eqn:adversarial-loss}
\end{aligned}
\vspace{0.5em}
\end{equation}
where $\textnormal{ADV}(\g)$ is the adversarial loss that lower bounds the GJSD and can be made tight if $f$ is optimized overall all possible classifiers. 
This optimization can be difficult to optimize in practice due to its adversarial formulation and vanishing gradients caused by JSD (see e.g. \cite{lucic2018gans,kurach2019the,pmlr-v119-farnia20a,nie2020towards,NEURIPS2020_3eb46aa5,han2023ffb,arjovsky2017towards}).
We aim to address both optimization issues by forming a min-min problem (\autoref{sec:relaxing-invertibility}) and considering additive noise to avoid vanishing gradients (\autoref{sec:noisy-jsd}).

\paragraph{Fair VAE Methods}
A series of prior works in fairness implemented distribution matching methods based on VAEs \citep{kingma2019introduction}, where the prior is assumed to be the standard normal distribution $\mathcal{N}(0,I)$ and the probabilistic encoder represents a stochastic aligner.
Concretely, the fair VAE objective \citep{louizos2015variational} can be viewed as an \emph{upper bound} on the GJSD:
\vspace{0.5em}
\begin{equation}
    \begin{aligned}
    &\min_{q(\zvec|\xvec,\d)} \!\!\big(\min_{p(\xvec|\zvec,\d)} 
    \E_{q(\xvec,\zvec,\d)}\big[-\log \frac{p(\xvec|\zvec, \d)}{q(\zvec |\xvec, \d)}\cdot p_{\mathcal{N}(0,I)}(\zvec) \big]\big) \notag \\
    &\geq \min_{q(\zvec|\xvec,\d)} \GJSD(\{q(z|d)\}_{d=1}^k).\,
\end{aligned}
\vspace{0.5em}
\end{equation}
We revisit and compare to this and other VAE-based methods \citep{gupta2021controllable,moyer2018invariant} in detail in \autoref{sec:revisiting-vae}.

\paragraph{Flow-based Methods}
Leveraging the development of invertible normalizing flows \citep{papamakarios2021normalizing}, \citet{grover2020alignflow} proposed a combination of flow-based and adversarial distribution matching objectives for domain adaptation.
\citet{usman2020log} proposed another upper bound for flow-based models  where one distribution is fixed.
Recently, \citet{cho2022cooperative} generalized prior flow-based methods under a common framework by proving that the following flow-based alignment upper bound (AUB) is an \emph{upper bound} on GJSD\footnote{refer to \autoref{appendix:AUB} for more background on AUB}:
\vspace{1em}
\begin{equation}
    \begin{aligned}
    &\min_\g \left(\min_{p(\zvec) \in \pset} \E_{q(\xvec, \zvec, \d)}\big[-\log \big(|J_{g}(\xvec| \d)|\cdot p(\zvec)\big) \big]\right) \notag \\
    &=\min_\g \mname(\g) \geq \min_\g \GJSD(\{q(z|d)\}_{d=1}^k).\,,
    \end{aligned}
    \vspace{0.5em}    
\end{equation}

Like VAE-based methods, this forms a min-min problem but the optimization is over the prior distribution $p(\zvec)$ rather than a decoder.
But, unlike VAE-based methods, AUB does not impose any constraints on the shared latent distribution $p(\zvec)$, i.e., it does not have to be a fixed latent distribution.
While AUB \citep{cho2022cooperative} provides an elegant characterization of distribution matching theoretically, the implementation of AUB still suffers from two issues.
First, AUB assumes that its aligner $g$ is invertible, which requires specialized architectures and can be challenging to optimize in practice.
Second, AUB inherits the vanishing gradient problem of theoretic JSD even if the bound is made tight---which was originally pointed out by \citet{arjovsky2017towards}.
We aim to address these two issues in the subsequent sections and then revisit VAE-based matching to see how our resulting matching loss is both similar and different from prior VAE-based methods.

\section{RELAXING INVERTIBILITY CONSTRAINT OF AUB VIA VAES}
\label{sec:relaxing-invertibility}

One key limitation of the AUB matching measure is that $\g$ must be invertible, which can be challenging to enforce and optimize.
Yet, the invertibility of $\g$ provides two distinct properties. 
First, invertibility enables exact log likelihood computation via the change-of-variables formula for invertible functions. 
Second, invertibility perfectly preserves mutual information between the observed and latent space conditioned on the domain label, i.e., $I(\xvec,\zvec|\d)$ achieves its maximal value.
Therefore, we aim to relax invertibility while seeking to retain the benefits of invertibility as much as possible.
Specifically, we approximate the log likelihood via a VAE approach, which we show is a true relaxation of the Jacobian determinant computation, and we add a mutual information term $I(\xvec,\zvec|\d)$, which attains its maximal value if the encoder is invertible. 
These two relaxation steps together yield an distribution matching objective that is mathematically similar to the domain-conditional version of $\beta$-VAE \citep{higgins2017betavae} where $\beta \leq 1$.
Finally, we propose a plug-and-play version of our objective that can be used as a drop-in replacement for adversarial loss terms so that the matching bounds can be used in any model pipeline.


\subsection{VAE-based Alignment Upper Bound (VAUB)}
We will first relax invertibility by replacing $\g$ with a stochastic autoencoder, where $q(\zvec|\xvec,\d)$ denotes the encoder and $p(\xvec | \zvec, \d)$ denotes the decoder.
As one simple example, the encoder could be a deterministic encoder $\g$ plus some learned Gaussian noise, i.e., $q_{\g}(\zvec | \xvec, \d) = \normal(\g(\xvec|\d), \sigma^2(\xvec,\d) I)$.
The marginal latent encoder distribution is $q(\zvec|\d) = \int_{\xset} q(\xvec|\d) q(\zvec|\xvec,\d)\mathrm{d}\xvec$.
Given this, we can define an VAE-based objective and prove that it is an upper bound on the GJSD.
All proofs are provided in \autoref{appendix:proof}.
\vspace{0.5em}
\begin{definition}[VAE Alignment Upper Bound (VAUB)]
The $\mathrm{VAUB}(q(\zvec|\xvec,\d))$ of a probabilistic aligner (i.e., encoder) $q(\zvec|\xvec,\d)$ is defined as:
\vspace{1em}
\begin{equation}
    \begin{aligned}
    \min_{\substack{ p(\zvec) \\ p(\xvec|\zvec,\d)}} 
        \E_{q(\xvec,\zvec,\d)}\left[-\log \frac{p(\xvec|\zvec, \d)}{q(\zvec |\xvec, \d)}\cdot p(\zvec) \right]
        + C , 
\end{aligned}
\vspace{0.5em}
\end{equation}
where $C \triangleq -\E_{q(\d)}[\HH(q(\xvec|\d))]$ is constant w.r.t.  $q(\zvec|\xvec,\d)$ and thus can be ignored during optimization.
\end{definition}
\vspace{0.5em}
\begin{theorem}[VAUB is an upper bound on GJSD]
\label{thm:vaub-upper-bound}
VAUB is an upper bound on GJSD between the latent distributions $\{q(\zvec|\d)\}_{\d=1}^\ndist$ with a bound gap of $\KL(q(\zvec), p(\zvec)) + \E_{q(\d)q(\zvec|\d)}[\KL(q(\xvec|\zvec,\d),p(\xvec|\zvec,\d))]$ that can be made tight if the $p(\xvec|\zvec,\d)$ and $p(\zvec)$ are optimized over all possible densities. 
\end{theorem}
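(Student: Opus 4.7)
The plan is to show that, for any admissible choice of $p(\zvec)$ and $p(\xvec|\zvec,\d)$, the expression inside the minimization in $\mathrm{VAUB}$ equals exactly $\GJSD(\{q(\zvec|\d)\}_{\d=1}^\ndist)$ plus the two non-negative KL terms in the claimed gap. Upper-boundedness and tightness then follow immediately by monotonicity of the minimum and by exhibiting the optimal densities that zero out both KL terms.

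First, I would apply Bayes' rule to the encoder, writing $q(\zvec|\xvec,\d) = q(\xvec|\zvec,\d)\,q(\zvec|\d)/q(\xvec|\d)$, and substitute into the log inside the VAUB integrand. This splits the logarithm into three pieces: $\log\tfrac{q(\xvec|\zvec,\d)}{p(\xvec|\zvec,\d)}$, $\log\tfrac{q(\zvec|\d)}{p(\zvec)}$, and $-\log q(\xvec|\d)$. Taking expectations under $q(\xvec,\zvec,\d) = q(\d)q(\xvec|\d)q(\zvec|\xvec,\d)$ and using $q(\xvec|\d)q(\zvec|\xvec,\d) = q(\zvec|\d)q(\xvec|\zvec,\d)$ where convenient, these become, respectively, $\E_{q(\d)q(\zvec|\d)}[\KL(q(\xvec|\zvec,\d),p(\xvec|\zvec,\d))]$, $\E_{q(\d)}[\KL(q(\zvec|\d), p(\zvec))]$, and $\E_{q(\d)}[\HH(q(\xvec|\d))]$. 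The last piece exactly cancels the constant $C = -\E_{q(\d)}[\HH(q(\xvec|\d))]$.

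The main technical step is a chain-rule identity that converts the averaged conditional KL into a marginal KL plus the GJSD:
$$\E_{q(\d)}[\KL(q(\zvec|\d), p(\zvec))] = \GJSD(\{q(\zvec|\d)\}_{\d=1}^\ndist) + \KL(q(\zvec), p(\zvec)).$$
I would prove this by expanding the left-hand side as $\Hc(q(\zvec), p(\zvec)) - \E_{q(\d)}[\HH(q(\zvec|\d))]$, splitting the cross entropy as $\Hc(q(\zvec), p(\zvec)) = \HH(q(\zvec)) + \KL(q(\zvec), p(\zvec))$, and recognizing $\HH(q(\zvec)) - \E_{q(\d)}[\HH(q(\zvec|\d))] = I(\zvec;\d) = \GJSD(\{q(\zvec|\d)\}_{\d=1}^\ndist)$ from the GJSD/mutual-information identity stated in the notation paragraph.

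Combining these yields, for every admissible pair $(p(\zvec), p(\xvec|\zvec,\d))$,
$$\textstyle\text{(integrand of VAUB)} = \GJSD + \KL(q(\zvec), p(\zvec)) + \E_{q(\d)q(\zvec|\d)}[\KL(q(\xvec|\zvec,\d), p(\xvec|\zvec,\d))],$$
so $\mathrm{VAUB} \geq \GJSD$ by non-negativity of KL, with the stated gap. Tightness under unconstrained optimization follows by plugging in $p^\star(\zvec) = q(\zvec)$ and $p^\star(\xvec|\zvec,\d) = q(\xvec|\zvec,\d)$, which kills both KL terms. The hard part will be the careful bookkeeping through the Bayes-rule substitution (keeping track of which conditional distribution each expectation is taken under); the remainder is essentially the information-theoretic chain rule.
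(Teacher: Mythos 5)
Your proposal is correct and is essentially the same argument as the paper's: the paper modularizes it into two lemmas (a variational GJSD upper bound with gap $\KL(q(\zvec),p(\zvec))$, and an entropy change-of-variables bound for probabilistic autoencoders with gap $\E_{q(\zvec)}[\KL(q(\xvec|\zvec,\d),p(\xvec|\zvec,\d))]$), whereas you carry out the identical Bayes-rule decomposition and chain-rule identity in a single pass, arriving at the same exact gap and the same tightness argument via $p^\star(\zvec)=q(\zvec)$ and $p^\star(\xvec|\zvec,\d)=q(\xvec|\zvec,\d)$.
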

While prior works proved a similar bound \citep{louizos2015variational,gupta2021controllable}, an important difference is that this bound can be made tight if optimized over the shared latent distribution $p(\zvec)$, whereas prior works assume $p(\zvec)$ is a fixed normal distribution (more details in \autoref{sec:revisiting-vae}).
Thus, VAUB is a more direct relaxation of AUB, which optimizes over $p(\zvec)$.
Another insightful connection to the flow-based AUB \cite{cho2022cooperative} is that the term $\E_{q(\zvec|\xvec,\d)}[-\log \frac{p(\xvec|\zvec, \d)}{q(\zvec |\xvec, \d)}]$ can be seen as an upper bound generalization of the $-\log |J_\g(\xvec|\d)|$, similar to the correspondence noticed in \cite{nielsen2020survae}.
The following proposition proves that this term is indeed a strict generalization of the Jacobian determinant term.
\vspace{0.5em}
\begin{proposition}
\label{thm:log-determinant}
    If the decoder is optimal, i.e., $p(\xvec|\zvec,\d)=q(\xvec|\zvec,\d)$, then the decoder-encoder ratio is the ratio of the marginal distributions: $\E_{q(\zvec|\xvec,\d)}\Big[-\log \frac{p(\xvec|\zvec,\d)}{q(\zvec|\xvec,\d)}\Big]= \E_q\Big[-\log\frac{q(\xvec|\d)}{q(\zvec|\d)}\Big]$.
    If the encoder is also invertible, i.e., $q(\zvec|\xvec,\d) = \delta(\zvec-g(\xvec))$, where $\delta$ is a Dirac delta, then the ratio is equal to the Jacobian determinant: $\E_{q(\zvec|\xvec,\d)}\Big[-\log \frac{p(\xvec|\zvec,\d)}{q(\zvec|\xvec,\d)}\Big]=-\log |J_g(\xvec|\d)|$.
\end{proposition}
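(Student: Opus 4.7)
The plan is to chain two elementary manipulations: (i) Bayes' rule applied to the posterior $q(\xvec\mid\zvec,\d)$ to establish the first equality, and then (ii) the change-of-variables formula under the degenerate (invertible) encoder to establish the second.

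First, I would observe that since $q(\xvec,\zvec,\d) = q(\d) q(\xvec\mid\d) q(\zvec\mid\xvec,\d) = q(\d) q(\zvec\mid\d) q(\xvec\mid\zvec,\d)$, Bayes' rule gives
\[
q(\xvec\mid\zvec,\d) \;=\; \frac{q(\zvec\mid\xvec,\d)\,q(\xvec\mid\d)}{q(\zvec\mid\d)}.
\]
Plugging in the optimality assumption $p(\xvec\mid\zvec,\d) = q(\xvec\mid\zvec,\d)$ into the ratio inside the expectation yields immediately
\[
\frac{p(\xvec\mid\zvec,\d)}{q(\zvec\mid\xvec,\d)} \;=\; \frac{q(\xvec\mid\d)}{q(\zvec\mid\d)},
\]
so taking $-\log$ and expectation under $q(\zvec\mid\xvec,\d)$ (and noting the right-hand side no longer depends on the conditional encoder value in a way that affects the identity) proves the first claim.

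Second, under the additional assumption that $q(\zvec\mid\xvec,\d)=\delta(\zvec - g(\xvec\mid\d))$ with $g(\cdot\mid\d)$ invertible, the standard change-of-variables formula gives
\[
q(\zvec\mid\d) \;=\; q(\xvec\mid\d)\,|J_g(\xvec\mid\d)|^{-1}\quad\text{where }\xvec = g^{-1}(\zvec\mid\d),
\]
and therefore $\frac{q(\xvec\mid\d)}{q(\zvec\mid\d)} = |J_g(\xvec\mid\d)|$. Substituting into the first part concludes the proposition. The Dirac-delta encoder makes the expectation over $q(\zvec\mid\xvec,\d)$ collapse to evaluation at $\zvec = g(\xvec\mid\d)$, which is exactly where the Jacobian is evaluated.

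The main (mild) subtlety is handling the Dirac-delta encoder rigorously: strictly speaking the ratio $p/q$ is not well defined as densities in that degenerate limit. I would address this either by treating $g$ with additive Gaussian noise of variance $\sigma^2 \to 0$ and observing that the Bayes'-rule identity above holds for every $\sigma^2 > 0$ with the limit yielding the Jacobian term, or by appealing to the standard measure-theoretic pushforward interpretation of densities under invertible maps, which is precisely the viewpoint taken by \citet{nielsen2020survae}. Beyond this limit argument, every step is a one-line calculation.
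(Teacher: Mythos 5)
Your proof is correct and follows essentially the same route as the paper's: Bayes' rule (equivalently, factoring the joint $q(\xvec,\zvec|\d)$ two ways) collapses the decoder--encoder ratio to $q(\xvec|\d)/q(\zvec|\d)$, and the Dirac-delta encoder reduces the expectation to evaluation at $\zvec=g(\xvec|\d)$ where the change-of-variables formula yields $|J_g(\xvec|\d)|$. Your closing remark on the degeneracy of the density ratio under a Dirac-delta encoder is a point the paper's proof silently glosses over, and the $\sigma^2\to 0$ limiting argument you sketch is a reasonable way to make that step rigorous.
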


This proposition gives a stochastic version of the change of (probability) volume under transformation.
In the invertible case, this is captured by the Jacobian determinant. While for the stochastic case, given an input $\xvec$, consider sampling multiple latent points from the posterior $q(\zvec|\xvec,d)$, which can be thought of as a posterior mean prediction plus some small noise.
Now take the expected ratio between the marginal densities of $q(\zvec|d)$ for each sample point and $q(\xvec|d)$.
If on average $q(\xvec|d)/q(\zvec|d) > 1$, then the transformation locally expands the space (akin to determinant greater than 1) and vice versa.
This ratio estimator can consider volume changes due to non-invertibility and stochasticity. For example, $\zvec = b\xvec + \epsilon$ for some $b<1$ and independent noise $\epsilon$, locally ``shrinks'' because $b<1$ but also locally expands because the noise flattens the distribution.

While the VAUB looks similar to standard VAE objectives, the key difference is noticing the role of $\d$ in the bound.
Specifically, the encoder and decoder can be conditioned on the domain $\d$ but the trainable prior $p(\zvec)$ is \emph{not conditioned on the domain $\d$}.
This shared prior ties all the latent domain distributions together so that the optimal is only achieved when $q(\zvec|\d)=q(\zvec)$ for all $\d$.
Additionally, the perspective here is flipped from the VAE generative model perspective; rather than focusing on the generative model $p$ the goal is finding the encoder $q$ while $p$ is seen as a variational distribution used to learn $q$.
Finally, we note that VAUB could accommodate the case where the encoder is shared, i.e., it does not depend on $\d$ so that $q(\zvec|\xvec,\d) = q(\zvec|\xvec)$.
However, the dependence of the decoder $p(\xvec|\zvec,\d)$ on $\d$ should be preserved (otherwise the domain information would be totally ignored and distribution matching would not be enforced).

\subsection{Preserving Mutual Information via Reconstruction Loss}
While the previous section proved an alignment upper bound for probabilistic aligners based on VAEs, we would also like to preserve the property of flow-based methods that preserves the mutual information between the observed and latent spaces.
Formally, for flow-based aligners $\g$, we have that by construction $I(\xvec,\zvec|\d)=I(\xvec,g^{-1}(\zvec|\d)|\d)=I(\xvec,\xvec|\d)=\HH(\xvec|\d)$, i.e., no information is lost.
Instead of requiring exact invertibility, we relax this property by maximizing the mutual information between $\xvec$ and $\zvec$ given the domain $\d$.
Mutual information can be lower bounded by the negative log likelihood of a decoder (i.e., the reconstruction loss term of VAEs), i.e., $I(\xvec,\zvec|\d) \geq \max_{\tilde{p}(\xvec|\zvec,\d)} \E_{q(\xvec|\d)}[\log \tilde{p}(\xvec|\zvec,\d)] + C$, where $\tilde{p}$ is a variational decoder and $C$ is independent of model parameters (though well-known, we include the proof in \autoref{appendix:proof} for completeness).
Similar to the previous section, this relaxation is a strict generalization of invertibility in the sense that mutual information is maximal in the limit of the encoder being exactly invertible.
While technically this decoder $\tilde{p}$ could be different from the alignment-based $p$, it is natural to make them the same so that $\tilde{p}(\xvec|\zvec,\d)\triangleq p(\xvec|\zvec,\d)$.
Therefore, this additional reconstruction loss can be directly combined with the overall objective:
\vspace{1em}
\begin{equation}
\begin{aligned}
    &\GJSD(\{q(\zvec|\d)\}_{d=1}^k) + \lambda\E_{q}[-I(\xvec,\zvec|\d)] + C \notag\\
    &\leq\!\!\! \min_{\substack{ p(\zvec) \\ p(\xvec|\zvec,\d)}} 
        \underbrace{\E_{q}\!\Big[\!\!-\!\log \frac{p(\xvec|\zvec, \d)}{q(\zvec |\xvec, \d)} p(\zvec) \Big]}_{\text{VAUB objective}}
     +\lambda \underbrace{\E_{q}[-\log p(\xvec|\zvec,\d)]}_{\text{Bound on $I(\zvec,\xvec|\d)$}} \notag \\
    &= \!\!\!\min_{\substack{ p(\zvec) \\ p(\xvec|\zvec,\d)}} \!\!\!
        {\textstyle \frac{1}{\beta}}\underbrace{\E_{q}\!\Big[\!\!-\!\log \frac{p(\xvec|\zvec, \d)}{q(\zvec |\xvec, \d)^\beta} p(\zvec)^\beta \Big]}_{\text{$\beta$-VAUB obj with $\beta \triangleq \frac{1}{1+\lambda}$}}
\end{aligned}
\vspace{0.5em}
\end{equation}
where $\lambda \geq 0$ is the mutual information regularization and $\beta \triangleq \frac{1}{1+\lambda} \leq 1$ is a hyperparameter reparametrization that matches the form of $\beta$-VAE \citep{higgins2017betavae}.
While the form is similar to a vanilla $\beta$-VAE (except for conditioning on the domain label $\d$), the goal of $\beta$ here is to encourage good reconstruction while also ensuring distribution matching rather than making features independent or more disentangled as in the original paper \citep{higgins2017betavae}.
Therefore, we always use $\lambda \geq 0$ (or equivalently $\beta\leq 1$).
As will be discussed when comparing to other VAE-based matching methods, this modification is critical for the good performance of VAUB, particularly to avoid posterior collapse.
Indeed, posterior collapse can satisfy latent distribution matching but would not preserve any information about the input, i.e., if $q(\zvec|\xvec,\d) = q_{\mathcal{N}(0,I)}(\zvec)$, then  the latent distributions will be trivially matched but no information will be preserved, i.e., $I(\xvec,\zvec|\d) = 0$.

\subsection{Plug-and-Play Matching Loss}
While it may seem that VAUB requires a VAE model, we show in this section that VAUB can be encapsulated into a self-contained loss function similar to the self-contained adversarial loss function.
Specifically, using VAUB, we can create a plug and play matching loss that can replace any adversarial loss with a non-adversarial counterpart \emph{without requiring any architecture changes to the original model}.
\begin{definition}[Plug-and-play matching loss] \label{eqn:vaub_pnp}
   Given a deterministic feature extractor $\g$, let  $q_{\g,\sigma^2}(\zvec|\xvec,\d) \triangleq \mathcal{N}(\g(\xvec|\d), \textnormal{diag}(\sigma^2(\xvec,\d)))$ be a simple probabilistic version of $\g$ where $\sigma^2(\xvec,\d)$ is a trainable diagonal convariance matrix.
   Then, the plug-and-play matching loss for any deterministic representation function $\g$ can be defined as:
   \vspace{0.5em}
   \begin{align}
       &\textnormal{VAUB\_PnP}(\g) \triangleq \beta\textnormal{-VAUB}(q_{\g,\sigma^2}(\zvec|\xvec,\d)) \\
       &=\!\!\!\!\min_{\substack{\sigma^2(\xvec,\d) \\ p(\xvec|\zvec,\d), p(\zvec)}}\!\!\! \E_{q}\left[-\log \left(\frac{p(\xvec|\zvec, \d)}{q_{\g,\sigma^2}(\zvec |\xvec, \d)^\beta} p(\zvec)^\beta\right)\right] . \notag
   \end{align}
\end{definition}
Like the adversarial loss in \autoref{eqn:adversarial-loss}, this loss is a self-contained variational optimization problem where the auxiliary models (i.e., discriminator for adversarial and decoder distributions for VAUB) are only used for distribution matching optimization.
This loss does not require the main pipeline to be stochastic and these auxiliary models could be simple functions.
While weak auxiliary models for adversarial could lead to an arbitrarily poor approximation to GJSD, our upper bound is guaranteed to be an upper bound even if the auxiliary models are weak.
Thus, we suggest that our $\beta$-VAUB\_PnP loss function can be used to safely replace any adversarial loss function.

\section{NOISY JENSEN-SHANNON DIVERGENCE}
\label{sec:noisy-jsd}
While in the previous section we addressed the invertibility limitation of AUB, we now consider a different issue related to optimizing a bound on the JSD.
As has been noted previously \cite{arjovsky2017wasserstein}, the standard JSD can saturate when the distributions are far from each other which will produce vanishing gradients.   
While \cite{arjovsky2017wasserstein} switch to using Wasserstein distance instead of JSD, we revisit the idea of adding noise to the JSD as in the predecessor work \cite{arjovsky2017towards}.
\cite{arjovsky2017towards} suggest smoothing the input distributions with Gaussian noise to make the distributions absolutely continuous and prove that this noisy JSD is an upper bound on the Wasserstein distance.
However, in \cite{arjovsky2017towards}, the JSD is estimated via an adversarial loss, which is a \emph{lower bound} on JSD. Thus, it is incompatible with their theoretic upper bound on Wasserstein distance.
In contrast, because we have proven an upper bound on JSD, we can also consider a upper bound on noisy JSD that could avoid some of the problems with the standard JSD.
We first define noisy JSD and prove that it is in fact a true divergence.
\begin{definition}[Noisy JSD]
Noisy JSD is the JSD after adding Gaussian noise to the distributions, i.e.,
$
    \mathrm{NJSD}_{\sigma}(p,q) = \mathrm{JSD}(\tilde{p}_\sigma, \tilde{q}_\sigma) ,
$
where $\tilde{p}_{\sigma} \triangleq p \ast \mathcal{N}(0, \sigma^2 I )$ ($\ast$ denotes convolution) and similarly for $\tilde{q}_{\sigma}$.
\end{definition}
Note that in terms of random variables, if $x \sim p_x$, $y \sim p_y$, and $\epsilon \sim \normal(0, \sigma^2 I)$, then $\mathrm{NJSD}(p_x, p_y) = \mathrm{NJSD}(p_{x+\epsilon}, p_{y+\epsilon})$.
We prove that Noisy JSD is indeed a statistical divergence using the properties of JSD and the fact that convolution with a Gaussian density is invertible.
\begin{proposition}
\label{thm:nsj-is-divergence}
    Noisy JSD is a statistical divergence.
\end{proposition}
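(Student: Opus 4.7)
To show $\mathrm{NJSD}_\sigma$ is a statistical divergence, I need to verify the two defining properties: (i) $\mathrm{NJSD}_\sigma(p,q) \geq 0$ for all probability measures $p,q$, and (ii) $\mathrm{NJSD}_\sigma(p,q) = 0$ if and only if $p = q$. The plan is to reduce both properties to the corresponding properties of standard JSD combined with the injectivity of convolution with a Gaussian.

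\textbf{Step 1 (non-negativity).} Since $\tilde{p}_\sigma$ and $\tilde{q}_\sigma$ are themselves probability densities (convolution of a probability density with a probability density yields a probability density), and JSD is known to be non-negative on any pair of densities, we immediately obtain $\mathrm{NJSD}_\sigma(p,q) = \mathrm{JSD}(\tilde{p}_\sigma, \tilde{q}_\sigma) \geq 0$.

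\textbf{Step 2 (identity of indiscernibles, easy direction).} If $p = q$, then $\tilde{p}_\sigma = p \ast \mathcal{N}(0,\sigma^2 I) = q \ast \mathcal{N}(0,\sigma^2 I) = \tilde{q}_\sigma$, and $\mathrm{JSD}$ of a distribution with itself is zero, so $\mathrm{NJSD}_\sigma(p,q) = 0$.

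\textbf{Step 3 (identity of indiscernibles, hard direction).} Suppose $\mathrm{NJSD}_\sigma(p,q) = 0$. Since JSD is a divergence, this forces $\tilde{p}_\sigma = \tilde{q}_\sigma$ (almost everywhere). The core step is then to argue that convolution against the Gaussian kernel $\mathcal{N}(0,\sigma^2 I)$ is injective on probability measures, i.e.\ $p \ast \mathcal{N}(0,\sigma^2 I) = q \ast \mathcal{N}(0,\sigma^2 I)$ implies $p = q$. The cleanest route is via characteristic functions (Fourier transforms): by the convolution theorem, $\widehat{\tilde{p}_\sigma}(\xi) = \widehat{p}(\xi)\cdot e^{-\sigma^2 \|\xi\|^2/2}$, and similarly for $q$. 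The Gaussian characteristic function is strictly positive for all $\xi$, so equality of $\widehat{\tilde{p}_\sigma}$ and $\widehat{\tilde{q}_\sigma}$ forces $\widehat{p} = \widehat{q}$ everywhere, and uniqueness of the Fourier transform on finite measures then yields $p = q$. This is the main (and only nontrivial) obstacle; I expect it to be a short Fourier-analytic argument rather than a real difficulty.

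\textbf{Step 4 (conclusion).} Combining Steps 1--3 establishes that $\mathrm{NJSD}_\sigma$ satisfies both defining properties of a statistical divergence. I do not expect symmetry to be claimed here (divergences are not in general required to be symmetric), but if desired it is inherited directly from the symmetry of JSD. No triangle inequality is asserted, consistent with the usual convention that ``divergence'' is weaker than ``metric.''
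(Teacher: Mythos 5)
Your proof is correct and follows essentially the same route as the paper's: reduce both divergence properties to those of JSD plus the injectivity of Gaussian convolution. The only difference is that you actually supply the characteristic-function argument for that injectivity, which the paper merely asserts (the paper also states its proof for the slightly more general noise-smoothed JSD, an expectation over noise levels, but the core argument is identical).
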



%
%
%

In \autoref{appendix-NJSD-demo}, we present a toy example illustrating how adding noise to JSD can alleviate plateaus in the theoretic JSD that can cause vanishing gradient issues and can smooth over local minimum in the optimization landscape.
We now prove noisy versions of both AUB and VAUB to be upper bounds of Noisy JSD.
To the best of our knowledge, these bounds are novel though straightforward in hindsight.
\begin{theorem}[Noisy alignment upper bounds]
\label{thm:noisy-upper-bounds}
For the flow-based AUB, the following upper bound holds for :
\vspace{0.5em}
\begin{equation}
\begin{aligned}
    &\textnormal{NAUB}(q(\zvec|\xvec,\d); \sigma^2)  + C \\
    &\triangleq \min_{p(\tilde{\zvec})} 
    \E_{q(\xvec,d)q(\epsilon;\sigma^2)}[-\log |J_g(\xvec|d)| p(g(\xvec|d) + \epsilon)] \notag \\
    &\geq \textnormal{NJSD}(\{q(\zvec|\d)\}_{\d=1}^\ndist; \sigma^2) \,,\notag
\end{aligned}
\vspace{0.5em}
\end{equation}

where $C\triangleq \E_{q(\d)}[ \HH(q(\xvec|\d))]$.
Similarly, for VAUB, the following upper bound holds:
\vspace{0.5em}
\begin{equation}
\begin{aligned}
    &\textnormal{NVAUB}(q(\zvec|\xvec,\d); \sigma^2) +C \\
    &\triangleq \min_{\substack{p(\tilde{\zvec}) \\ 
    p(\xvec|\zvec,\d)}} 
    \E_{q(\xvec,\zvec,\d)q(\epsilon;\sigma^2)}\left[-\log \left(\frac{p(\xvec|\zvec, \d)}{q(\zvec |\xvec, \d)}\cdot p(\zvec + \epsilon)\right)\right] \notag \\
    &\geq \textnormal{NJSD}(\{q(\zvec|\d)\}_{\d=1}^\ndist; \sigma^2) \,.\notag
\end{aligned}
\end{equation}
\end{theorem}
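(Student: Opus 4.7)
My plan is to prove both noisy bounds in parallel by reducing each to the same computation that underlies the proofs of the deterministic AUB and VAUB bounds, with a single new ingredient---a noise-adds-entropy inequality---supplied at the end. In each case, after optimizing over the variational objects, the bound collapses to $\HH(q(\tilde{\zvec}))-\E_{q(d)}[\HH(q(\zvec|d))]$, while $\textnormal{NJSD}=\GJSD(\{q(\tilde{\zvec}|d)\}_d)=\HH(q(\tilde{\zvec}))-\E_{q(d)}[\HH(q(\tilde{\zvec}|d))]$, so the gap is exactly $\E_{q(d)}[\HH(q(\tilde{\zvec}|d))-\HH(q(\zvec|d))]$.

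For NAUB I would first split the inner log in the definition into the $p$-independent piece $\E_{q(\xvec,d)}[-\log|J_g(\xvec|d)|]$ and the piece $\min_{p(\tilde{\zvec})}\E[-\log p(g(\xvec|d)+\epsilon)]$. The former collapses, via the standard change-of-variables identity $\log q(\xvec|d)=\log q(\zvec|d)+\log|J_g(\xvec|d)|$, to $\E_{q(d)}[\HH(q(\xvec|d))]-\E_{q(d)}[\HH(q(\zvec|d))]$, exactly as in the proof of the deterministic AUB. The latter is minimized by the true noisy marginal $p^\star(\tilde{\zvec})=q(\tilde{\zvec})\triangleq\E_{q(d)}[q(\zvec|d)\ast\mathcal{N}(0,\sigma^2 I)]$, contributing $\HH(q(\tilde{\zvec}))$. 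Subtracting $C=\E_{q(d)}[\HH(q(\xvec|d))]$ yields $\textnormal{NAUB}=\HH(q(\tilde{\zvec}))-\E_{q(d)}[\HH(q(\zvec|d))]$.

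For NVAUB, the log-Jacobian piece is replaced by $\min_{p(\xvec|\zvec,d)}\E_q[-\log(p(\xvec|\zvec,d)/q(\zvec|\xvec,d))]$. Optimizing the variational decoder at $p^\star(\xvec|\zvec,d)=q(\xvec|\zvec,d)$ turns this into $\E_{q(d)}[\HH(q(\xvec|\zvec,d))-\HH(q(\zvec|\xvec,d))]$, which by the chain rule of differential entropy $\HH(q(\xvec,\zvec|d))=\HH(q(\xvec|d))+\HH(q(\zvec|\xvec,d))=\HH(q(\zvec|d))+\HH(q(\xvec|\zvec,d))$ equals $\E_{q(d)}[\HH(q(\xvec|d))-\HH(q(\zvec|d))]$. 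The minimization over $p(\tilde{\zvec})$ contributes $\HH(q(\tilde{\zvec}))$ exactly as before, so once again $\textnormal{NVAUB}=\HH(q(\tilde{\zvec}))-\E_{q(d)}[\HH(q(\zvec|d))]$, now with $\zvec$ drawn from the stochastic encoder.

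The final step is to show the common gap $\E_{q(d)}[\HH(q(\tilde{\zvec}|d))-\HH(q(\zvec|d))]$ is nonnegative, i.e., that convolving with independent Gaussian noise cannot decrease differential entropy. Since $\epsilon\perp(\zvec,d)$, we have $\HH(\tilde{\zvec}|d)\geq\HH(\tilde{\zvec}|d,\epsilon)=\HH(\zvec|d,\epsilon)=\HH(\zvec|d)$, invoking in order (i)~conditioning reduces entropy, (ii)~translation invariance of differential entropy ($\tilde{\zvec}=\zvec+\epsilon$ shifts by a known constant once $\epsilon$ is given), and (iii)~independence of $\epsilon$ from $(\zvec,d)$. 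I do not anticipate a substantive obstacle here; the only things needing care are the sign bookkeeping of $C$ (which flips relative to the VAUB definition) and verifying that the three entropy facts above are stated for differential rather than discrete entropy, which they are.
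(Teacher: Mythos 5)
Your proposal is correct and follows essentially the same route as the paper's proof: the paper likewise identifies $\textnormal{NJSD}$ with $\GJSD$ of the noise-convolved latents, applies the cross-entropy/GJSD decomposition together with the (probabilistic) entropy change-of-variables, and closes the gap with exactly your noise-adds-entropy lemma ($\HH(\zvec+\epsilon)\geq\HH(\zvec+\epsilon\,|\,\epsilon)=\HH(\zvec)$). The only cosmetic difference is that you evaluate both sides at the variational optimum and compare, whereas the paper chains the inequalities starting from $\textnormal{NGJSD}$; just note that when the prior/decoder classes are restricted the objective only increases above your ``collapsed'' value, so the bound is unaffected.
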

By comparing the original objectives and these noisy objectives, we notice the correspondence between adding noise before passing to the shared distribution $p(\zvec + \epsilon)$ and the noisy JSD.
This suggests that simple additive noise can add an implicit regularization that could make the optimization smoother.
Similar to VAUB, a $\beta$-VAUB version of these can be used to preserve mutual information between $\xvec$ and $\zvec$.


    




\section{REVISITING VAE-BASED MATCHING METHODS FROM FAIRNESS LITERATURE}
\label{sec:revisiting-vae}
The literature on fair classification has proposed several VAE-based methods for distribution matching.
For fairness applications, the global objective includes both a classification loss and an matching loss but we will only analyze the matching losses in this paper.
\citet{louizos2015variational} first proposed the vanilla form of a VAE with an matched latent space where the prior distribution is fixed.
\cite{moyer2018invariant} and \cite{gupta2021controllable} take a mutual information perspective and bound two different mutual information terms in different ways.
They formulate the problem as minimizing the mutual information between $\zvec$ and $d$, where $d$ corresponds to their sensitive attribute---our generalized JSD is in fact equivalent to this mutual information term, i.e., $\GJSD(\{q(\zvec|d)\}_{d=1}^k) \equiv I(\zvec,d)$.
They then use the fact that $I(\zvec,d) = I(\zvec,d|\xvec) + I(\zvec,\xvec) - I(\zvec,\xvec|d) = I(\zvec,\xvec) - I(\zvec,\xvec|d)$, where the second equals is by the fact that $\zvec$ is a deterministic function of $\xvec$ and independent noise.
Finally, they bound $I(\zvec,\xvec)$ and $-I(\zvec,\xvec|d)$ separately.
We explain important differences here and point the reader to the \autoref{appendix:comparison-table} for a detailed comparison between methods.

We notice that most prior VAE-based methods use a fixed standard normal prior distribution $p_\mathcal{N}(\zvec)$.
This can be seen as a special case of our method in which the prior is not learnable.
However, a fixed prior actually imposes constraints on the latent space beyond distribution matching, which we formalize in this proposition.
\begin{proposition}
\label{thm:fixed-prior}
The fair VAE objective from \citet{louizos2015variational} with a fixed latent distribution $p_{\mathcal{N}(0,I)}(\zvec)$ can be decomposed into a VAUB term and a regularization term on the latent space:
\vspace{0.5em}
\begin{align*}
    &\min_{q(\zvec|\xvec,d)} \min_{p(\xvec|\zvec,d)} \E_{q}\big[-\log \frac{p(\xvec|\zvec,d)}{q(\zvec|\xvec,d)} p_{\mathcal{N}(0,I)}(\zvec) \big] \\
    &=\min_{q(\zvec|\xvec,d)} \underbrace{\Big(\min_{p(\xvec|\zvec,d)} \min_{p(\zvec)} \E_{q}\big[-\log \frac{p(\xvec|\zvec,d)}{q(\zvec|\xvec,d)} p(\zvec) \big] \Big)}_{\text{VAUB Alignment Objective}} \\
    &\quad\quad\quad\quad\quad\quad + \underbrace{\KL(q(\zvec),p_{\mathcal{N}(0,I)}(\zvec))}_{\text{Regularization}} \,.
\end{align*}
\end{proposition}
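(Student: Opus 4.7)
The plan is to show that fixing the prior to $p_{\mathcal{N}(0,I)}$ introduces an extra KL regularization on top of the VAUB objective (which lets the prior be learned). The key identity is that for any distribution $q(\zvec)$,
\[
\E_{q(\zvec)}[-\log p_{\mathcal{N}(0,I)}(\zvec)] \;=\; \min_{p(\zvec)} \E_{q(\zvec)}[-\log p(\zvec)] \;+\; \KL(q(\zvec),p_{\mathcal{N}(0,I)}(\zvec)),
\]
because the inner minimum is achieved uniquely at $p(\zvec) = q(\zvec)$ yielding the entropy $\HH(q(\zvec))$, and the remaining gap is exactly the KL to the fixed normal. This is the engine of the proof.

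First, I would split the integrand on the LHS into the reconstruction/encoder ratio term (which depends on $p(\xvec|\zvec,d)$) and the prior term (which does not):
\[
\E_q\!\left[-\log \frac{p(\xvec|\zvec,d)}{q(\zvec|\xvec,d)}\,p_{\mathcal{N}(0,I)}(\zvec)\right] \;=\; \E_q\!\left[-\log \frac{p(\xvec|\zvec,d)}{q(\zvec|\xvec,d)}\right] + \E_q\bigl[-\log p_{\mathcal{N}(0,I)}(\zvec)\bigr].
\]
Because $p_{\mathcal{N}(0,I)}$ does not depend on $p(\xvec|\zvec,d)$, the inner $\min_{p(\xvec|\zvec,d)}$ passes through and only acts on the first term. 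Then I apply the key identity above to the second term.

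Next, I reintroduce the inner $\min_{p(\zvec)}$. Since the reconstruction/encoder ratio term is independent of $p(\zvec)$ and the KL term is also independent of $p(\zvec)$, I can freely insert $\min_{p(\zvec)}$ in front of $\E_q[-\log p(\zvec)]$ without changing the value, giving
\[
\min_{p(\xvec|\zvec,d)} \E_q\!\left[-\log \frac{p(\xvec|\zvec,d)}{q(\zvec|\xvec,d)}\right] + \min_{p(\zvec)} \E_q[-\log p(\zvec)] + \KL(q(\zvec), p_{\mathcal{N}(0,I)}(\zvec)).
\]
I then recombine the first two minimizations back into a single expectation of a product (this step is valid because the two minimizations are over disjoint arguments), recovering the VAUB objective. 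Finally, taking $\min_{q(\zvec|\xvec,d)}$ on both sides gives the claimed identity.

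The only subtle point is justifying that $\min_{p(\zvec)} \E_{q(\zvec)}[-\log p(\zvec)] = \HH(q(\zvec))$ is attained by $p(\zvec)=q(\zvec)$ without regularity issues; this follows from Gibbs' inequality provided the optimization is over all densities, which matches the setup used throughout the paper for VAUB. All other manipulations are linearity of expectation and the fact that disjoint minimizations commute with addition, so no real obstacle arises beyond carefully tracking which variables each minimization acts on.
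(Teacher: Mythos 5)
Your proposal is correct and follows essentially the same route as the paper's proof: the paper "inflates" the integrand by the true marginal $q(\zvec)$ and identifies the resulting cross-entropy decomposition $\E_q[-\log p_{\mathcal{N}(0,I)}(\zvec)] = \HH(q(\zvec)) + \KL(q(\zvec), p_{\mathcal{N}(0,I)}(\zvec))$ together with $\HH(q(\zvec)) = \min_{p(\zvec)}\E_q[-\log p(\zvec)]$, which is exactly your key identity. Your bookkeeping of which terms each minimization acts on (and the Gibbs-inequality justification for the inner minimum being attained at $p(\zvec)=q(\zvec)$) matches the paper's argument.
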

This proposition highlights that prior VAE-based matching objectives are actually solving distribution matching \emph{plus a regularization term} that pushes the learned latent distribution to the normal distribution---i.e., they are biased distribution matching methods.
In contrast, GAN-based matching objectives do not have this bias as they do not assume anything about the latent space.
Similarly, our VAUB methods can be seen as relaxing this by optimizing over a class of latent distributions for $p(\zvec)$ to reduce the bias.

Furthermore, we notice that \cite{moyer2018invariant} used a similar term as ours for $-I(\zvec,\xvec|d)$ but used a non-parametric pairwise KL divergence term for $I(\zvec,\xvec)$, which scales quadratically in the batch size.
On the other hand, \cite{gupta2021controllable} uses a similar variational KL term as ours for $I(\zvec,\xvec)$ but decided on a contrastive mutual information bound for $-I(\zvec,\xvec|d)$.
\cite{gupta2021controllable} did consider using a similar term for $-I(\zvec,\xvec|d)$ but ultimately rejected this alternative in favor of the contrastive approach. 
We summarize the differences as follows:
\begin{enumerate}[wide=0pt,leftmargin=*,topsep=0pt,itemsep=-1ex,partopsep=1ex,parsep=1ex]
    \item We allow the shared prior distribution $p_\theta(\zvec)$ to be \emph{learnable} so that we do not impose any distribution on the latent space. Additionally, optimizing $p_\theta(\zvec)$ is significant as we show in \autoref{thm:vaub-upper-bound} that this can make our bound tight.
    \item The $\beta$-VAE change ensures better preservation of the mutula information of $\xvec$ and $\zvec$ inspired by the invertible models of AUB. This seemingly small change seems to overcome the limitation of the reconstruction-based approach originally rejected in \cite{gupta2021controllable}.
    \item We propose a novel noisy version of the bound that can smooth the optimization landscape while still being a proper divergence.
    \item We propose a \emph{plug-and-play} version of our bound that can be added to \emph{any} model pipeline and replace \emph{any} adversarial loss.
    Though not a large technical contribution, this perspective decouples the distribution matching loss from VAE models to create a self-contained distribution matching loss that can be broadly applied outside of VAE-based models.
\end{enumerate}

\section{EXPERIMENTS}


\subsection{Simulated Experiments} \label{sec:toy_experiment}

\paragraph{\changeziyu{Non-Matching Dimensions between Latent Space and Input Space}}
In order to demonstrate that our model relaxes the invertibility constraint of AUB, we use a dataset consisting of rotated moons where the latent dimension does not match the input dimension (i.e. the transformation between the input space and latent space is not invertible). 
Please note that the AUB model proposed by \cite{cho2022cooperative} is not applicable for such situations due to the requirement that the encoder and decoder need to be invertible, which restricts our ability to select the dimensionality of the latent space.
As depicted in \autoref{fig:rotated_moon}, our model is able to effectively reconstruct and flip the original two sample distributions despite lacking the invertible features between the encoders and decoders. Furthermore, we observe the mapped latent two sample distributions are matched with each other while sharing similar distributions to the shared distribution $p(z)$.

\paragraph{Noisy-AUB helps mitigate the Vanishing Gradient Problem}
In this example, we demonstrate that the optimization can get stuck in a plateau region without noise injection. However, this issue can be resolved using the noisy-VAUB approach. Initially, we attempt to match two Gaussian distributions with widely separated means. As shown in \autoref{fig:nvaub}, the shared distribution, $p(\zvec)$ (which is a Gaussian mixture model), initially fits the bi-modal distribution, but this creates a plateau in the optimization landscape with small gradient even though the latent space is misaligned. While VAUB can eventually escape such plateaus with sufficient training time, these plateaus can unnecessarily prolong the training process. In contrast, NVAUB overcomes this issue by introducing noise in the latent domain and thereby reducing the small gradient issue.

\begin{figure}[t]
\centering

\includegraphics[width=\linewidth]{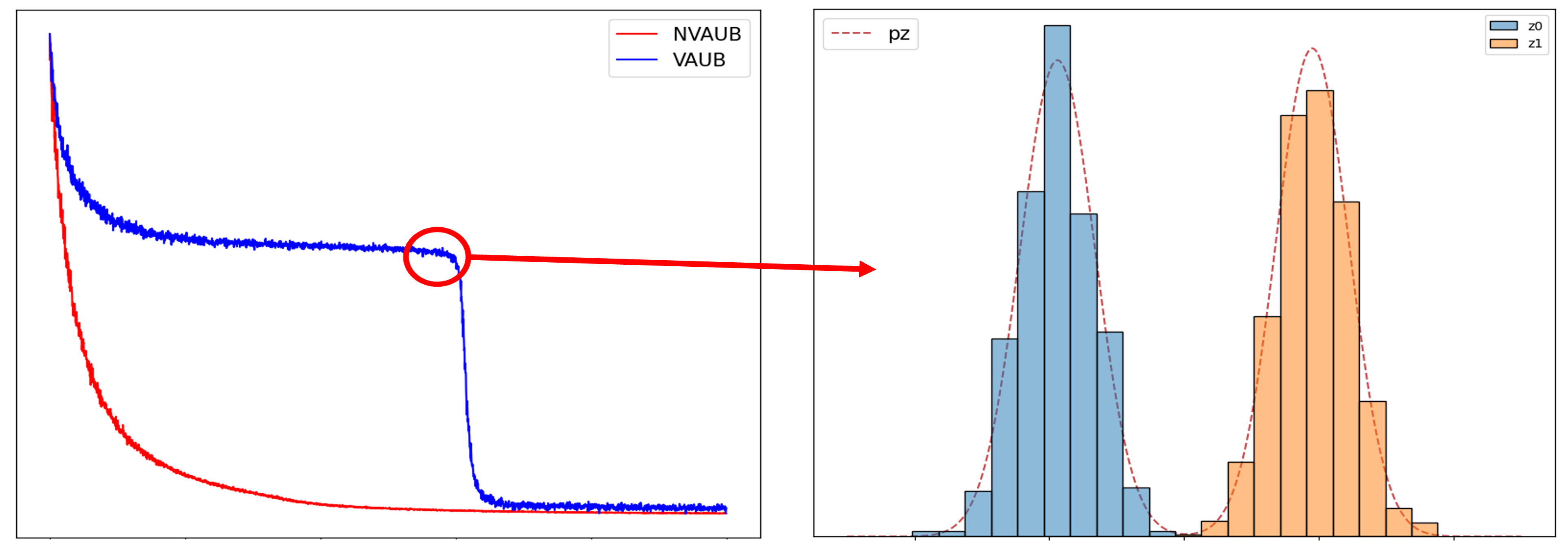}

\caption{This figure shows that the loss reaches a plateau during learning if VAUB is used. (a) shows the loss convergence graph for VAUB and NVAUB, while (b) visualizes the latent distribution $p(\zvec)$ with histogram density estimation of $z_i \sim q(z|x, d=i), i\in{0,1}$ at the \textbf{red circle} in figure (a). Notice that the latent distribution matches the mixture of the domains but the latent domain distributions are not yet aligned.}
\label{fig:nvaub}
\end{figure}

\begin{figure*}[ht]
\minipage{0.24\textwidth}
  \includegraphics[width=\linewidth]{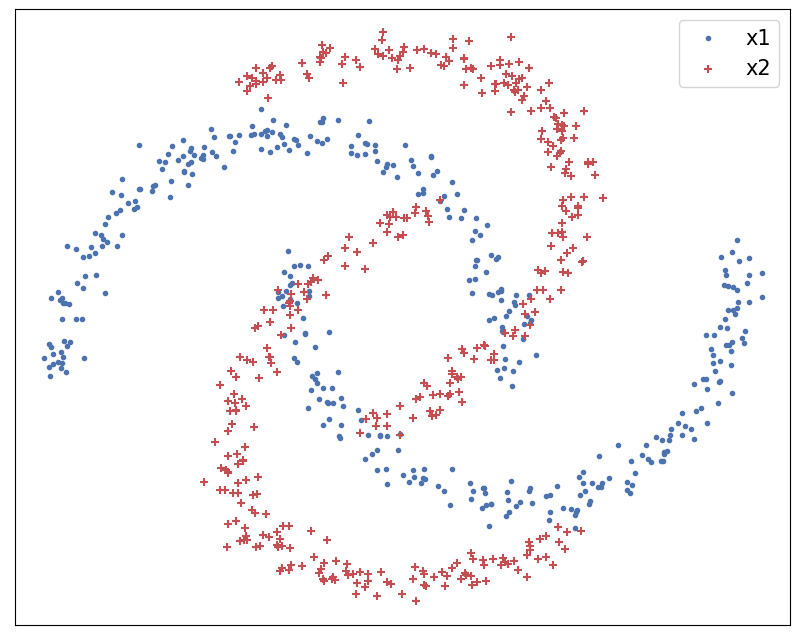}
  \caption*{(a) Original dist.}
\endminipage\hfill
\minipage{0.24\textwidth}
  \includegraphics[width=\linewidth]{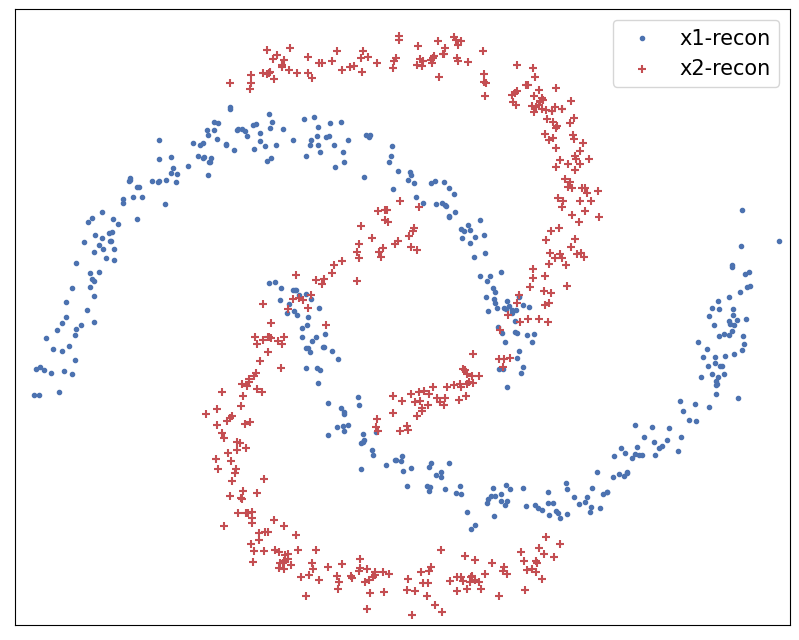}
  \caption*{(b) Reconstructed dist.}
\endminipage\hfill
\minipage{0.24\textwidth}
  \includegraphics[width=\linewidth]{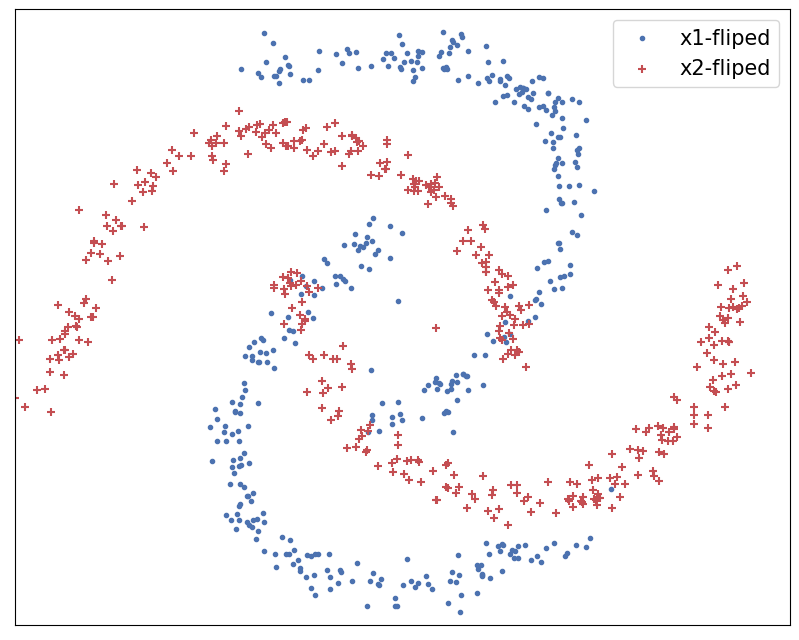}
  \caption*{(c) Transformed dist.}\label{fig:awesome_image2}
\endminipage\hfill
\minipage{0.24\textwidth}
  \includegraphics[width=\linewidth]{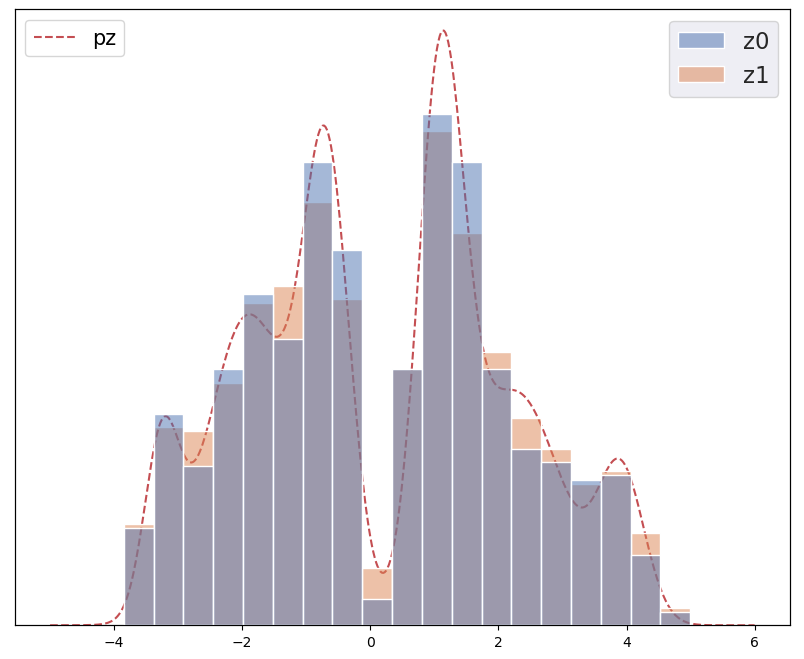}
  \caption*{(d) Latent dist. with $p(\zvec)$}\label{fig:awesome_image3}
\endminipage
\caption{
This figure shows distribution matching on the 2D rotated moons dataset while the latent sample distribution is 1D. (a) shows the original distribution where $x_1$ is the original moons dataset and $x_2$ is the rotated and scaled moons dataset. (b) shows the reconstructed distribution generated by using the probabilistic encoder and decoder within the same domain, i.e. $x^{recon}_i\sim p(x|z,d_{=i})$ (c) shows the flipped distribution between two distributions by forwarding the probabilistic encoder and decoder in different domains, i.e. $x^{flipped}_i\sim p(x|z,d_{=j})$, where the dataset is flipped from $i$ to $j$. (d) shows the shared distribution $p(\zvec)$, along with the histogram density estimation of the latent distributions of $z_i \sim q(z|x, d=i), i\in{0,1}$.}
\label{fig:rotated_moon}
\end{figure*}


\subsection{Other Non-adversarial Bounds}
\label{sec:experiments-comparison-other-bounds}

Due to the predominant focus on fairness learning in related works, we select the \emph{Adult dataset}\footnote{https://archive.ics.uci.edu/ml/datasets/adult}, comprising 48,000 instances of census data from 1994. To investigate distribution matching, we create domain samples by grouping the data based on gender attributes (male, female) and aim to achieve matching between these domains. We employ two alignemnt metrics: sliced Wasserstein distance (SWD) and a classification-based metric. For SWD, we obtain the latent sample distribution ($z_i \sim q(z|x_i)$) and whiten it to obtain $z^{white}_i$.
Because Wasserstein distance is sensitive to scaling, the whitening step is required to remove the effects of scaling, which do not fundamentally change the distribution matching performance. We then measure the SWD between the whitened sample distributions $z^{white}{male}$ and $z^{white}_{female}$ by projecting them onto randomly chosen directions and calculating the 1D Wasserstein distance.
We use $t$-test to assess significant differences between models.
For the classification-based metric, we train a Support Vector Machine (SVM) with a Gaussian kernel to classify the latent distribution, using gender as the label. A more effective distribution matching model is expected to exhibit lower classification accuracy, reflecting the increasing difficulty in differentiating between matched distributions. 
We chose kernel SVM over training a deep model because the optimization is convex with a unique solution given the hyperparameters.
In all cases, we used cross-validation to select the kernel SVM hyperparameters.

We choose two non-adversarial bounds \cite{moyer2018invariant, gupta2021controllable} as our baseline methods.
Here we only focus on the distribution matching loss functions of all baseline methods.
Note that these methods are closely related (see \autoref{sec:revisiting-vae}).
To ensure a fair comparison, we employ the same encoders for all models and train them until convergence. 
The results in \autoref{tab:compare_bounds} suggest that our VAUB method can achieve better distribution matching compared to prior variational upper bounds.



\begin{table}[htb]
\centering
    \caption{Distribution matching comparison between non-adversarial bounds (refer to \autoref{appendix:experiment-setup} for $t$-test details)}
    \begin{tabular}{cccc}
    \toprule
         & Moyer's  & Gupta's & VAUB  \\ 
    \midrule
    SWD ($\downarrow$)  & 9.71   & 6.70    & $\bm{5.64}$    \\
    SVM ($\downarrow$)  & 0.997  & 0.833   & $\bm{0.818}$   \\
    \bottomrule
    \end{tabular}\label{tab:compare_bounds}
\end{table}

\subsection{Plug-and-Play Implementation}
In this section, we see if our plug-and-play loss can be used to replace prior adversarial loss functions in generic model pipelines rather than those that are specifically VAE-based. 

\paragraph{Replacing the Adversarial Objective in Fairness Representation Models}
In this experiment, we explore the effectiveness of using our min-min VAUB plug-and-play loss as a replacement for the adversarial objective of LAFTR \citep{madras2018learning}. 
We include the LAFTR classifying loss and adjust the trade-off between the classification loss and matching loss via $\lambda_{aub}$ \autoref{eqn:vaub_pnp}.
In this section, we conduct a comparison between our proposed model and the LAFTR model on the Adult dataset where the goal is to fairly predict whether a person's income is above 50K, using gender as the sensitive attribute. 
To demonstrate the plug-and-play feature of our VAUB model, we employ the same network architecture for the deterministic encoder $g$ and classifier $h$ as in LAFTR-DP. We also share the parameters of our encoders (i.e. $q(z|x,d)=q(z|x)$) to comply with the structure of the LAFTR.
The results of the fair classification task are presented in \autoref{tab:fair-table}, where we evaluate using three metrics: overall accuracy, demographic parity gap ($\Delta_{DP}$), and test VAUB loss. We argue that our model(VAUB) still maintains the trade-off property between classification accuracy and fairness while producing similar results to LAFTR-DP. 
Moreover, we observe that LAFTR-DP cannot achieve perfect fairness in terms of demographic parity gap by simply increasing the fairness coefficient $\gamma$, whereas our model can achieve a gap of 0 with only a small compromise in accuracy. 
Finally, we note that the VAUB loss correlates well with the demographic parity gap, indicating that better distribution matching leads to improved fairness; perhaps more importantly, this gives some evidence that VAUB may be useful for measuring the relative distribution matching between two approaches.
Please refer to the appendix for a more comprehensive explanation of all the experiment details.

\begin{table}[ht]
\centering
\caption{Accuracy, Demographic Parity Gap ($\Delta_{DP}$), and VAUB Metric (in nats) on Adult Dataset with the models VAUB and LAFTR-DP. Numbers after VAUB indicate $\lambda_{aub}$, and numbers after LAFTR-DP indicate the fairness coefficient $\gamma$.}
\label{tab:fair-table}
\begin{tabular}{cccc}
\toprule
\textbf{Method} & \textbf{Accuracy} & \textbf{$\Delta_{DP}$} & \textbf{VAUB} \\
\midrule
VAUB(100)          & 0.74              & 0                     & 308.36         \\
VAUB(50)         & 0.76              & 0.0002                & 318.31         \\
VAUB(10)         & 0.797             & 0.058                 & 324.28         \\
VAUB(1)        & 0.835             & 0.252                 & 333.47         \\
\midrule
LAFTR-DP(1000)    &  0.765             & 0.015                 & -          \\
LAFTR-DP(4)      & 0.77              & 0.022                 & -              \\
LAFTR-DP(0.1)   & 0.838             & 0.21                  & -  \\
\bottomrule
\end{tabular}
\end{table}




\paragraph{Comparison of Training Stability between Adversarial Methods and Ours}
To demonstrate the stability of our non-adversarial training objective, we use plug-and-play VAUB model to replace the adversarial objective of DANN \citep{ganin2016domain}. 
We re-define the min-max objective of DANN to a min-min optimization objective while keeping the model the same. 
Because of the plug-and-play feature, we can use the exact same network architecture of the encoder (feature extractor) and label classifier as proposed in DANN and only replace the adversarial loss.
We conduct the experiment on the MNIST \citep{lecun2010mnist} and MNIST-M \citep{ganin2016domain} datasets, using the former as the source domain and the latter as the target domain.
We present the results of our model in \autoref{tab:table:da-fixed-pz}\footnote{We optimized the listed results for the DANN experiment to the best of our ability.}, which reveals a comparable or better accuracy compared to DANN. 
Notably, the NVAUB approach exhibits further improvement over accuracy, as adding noise may smooth the optimization landscape.
In contrast to the adversarial method, our model provides a reliable metric (VAUB loss) for assessing adaptation performance. The VAUB loss shows a strong correlation with test accuracy, whereas the adversarial method lacks a valid metric for evaluating adaptation quality (see \autoref{appendix:adv-vaub-compare} for figure).
This result on DANN and the previous on LAFTR give evidence that our method could be used as drop-in replacements for adversarial loss functions while retaining the performance and matching of adversarial losses.

\begin{table}[!ht]
\centering
    \caption{The table shows the accuracy after the domain adaptation in DANN, VAUB and NVAUB models. For each model, results are averaged from 5 experiments with different random seeds.}
    \begin{tabular}{cccc}
    \toprule
    Method & DANN & VAUB & NVAUB \\ 
    \midrule
    Accuracy ($\uparrow$) & 75.42 & 75.53 & $\mathbf{76.47}$ \\ 
    \bottomrule
    \end{tabular}\label{tab:table:da-fixed-pz}
\end{table}


\section{LIMITATION}

\paragraph{Empirical Scope} Since this paper is theoretically focused rather than empirically focused, our goal was to prove theoretic bounds for distribution matching, elucidate insightful connections to prior works (AUB, fair VAE, and GAN methods), and then empirically validate our methods compared to other related approaches on simple targeted experiments, which means
the experiments conducted in our study are deliberately simple and may not fully capture the complexity of real-world scenarios. We primarily utilize toy datasets such as the 2D moons dataset and 1D noisy VAUB illustrations to illustrate key insights. Although we validate our methods on common benchmark datasets like Adult and MNIST, our choice to avoid state-of-the-art methods and complex datasets may limit the generalizability of our findings.

\paragraph{Comparisons with SOTA models} We intentionally select representative methods such as LAFTR and DANN, along with well-known benchmark datasets, to demonstrate the feasibility of our approach. However, this choice may not fully capture the diversity of existing methods and datasets used in practical applications.
Also, our study does not aim for state-of-the-art performance on specific tasks, which may lead to overlooking certain performance metrics or nuances that are crucial in practical applications. While our approach shows promise as an alternative to adversarial losses, further exploration is needed to understand its performance across various metrics and tasks.

\section{DISCUSSION AND CONCLUSION}

In conclusion, we present a model-agnostic VAE-based distribution matching method that can be seen as a relaxation of flow-based matching or as a new variant of VAE-based methods. Unlike adversarial methods, our method is non-adversarial, forming a min-min cooperative problem that provides upper bounds on JSD divergences. We propose noisy JSD variants to avoid vanishing gradients and local minima and develop corresponding alignment upper bounds.  
We compare to other VAE-based bounds both conceptually and empirically showing how our bound differs.
Finally, we demonstrate that our non-adversarial VAUB alignment losses can replace adversarial losses without modifying the original model's architecture, making them suitable for standard invariant representation pipelines such as DANN or fair representation learning.
We hope this will enable distribution matching losses to be applied generically to different problems without the challenges of adversarial losses.

\clearpage
\section*{Acknowledgement}
Z.G. and D.I. acknowledge support from NSF (IIS-2212097) and ARL (W911NF-2020-221). H.Z. was partly supported by the Defense Advanced Research Projects Agency (DARPA) under Cooperative Agreement Number: HR00112320012, an IBM-IL Discovery Accelerator Institute research award, and Amazon AWS Cloud Credit.
\bibliography{references}

\section*{Checklist}

The checklist follows the references. For each question, choose your answer from the three possible options: Yes, No, Not Applicable.  You are encouraged to include a justification to your answer, either by referencing the appropriate section of your paper or providing a brief inline description (1-2 sentences). 
Please do not modify the questions.  Note that the Checklist section does not count towards the page limit. Not including the checklist in the first submission won't result in desk rejection, although in such case we will ask you to upload it during the author response period and include it in camera ready (if accepted).

\textbf{In your paper, please delete this instructions block and only keep the Checklist section heading above along with the questions/answers below.}

 \begin{enumerate}

 \item For all models and algorithms presented, check if you include:
 \begin{enumerate}
   \item A clear description of the mathematical setting, assumptions, algorithm, and/or model.
   Yes
   \item An analysis of the properties and complexity (time, space, sample size) of any algorithm. 
   Not Applicable
   \item (Optional) Anonymized source code, with specification of all dependencies, including external libraries. 
   Yes
 \end{enumerate}

 \item For any theoretical claim, check if you include:
 \begin{enumerate}
   \item Statements of the full set of assumptions of all theoretical results.
   Yes
   \item Complete proofs of all theoretical results.
   Yes
   \item Clear explanations of any assumptions.
   Yes
 \end{enumerate}

 \item For all figures and tables that present empirical results, check if you include:
 \begin{enumerate}
   \item The code, data, and instructions needed to reproduce the main experimental results (either in the supplemental material or as a URL).
   Yes
   \item All the training details (e.g., data splits, hyperparameters, how they were chosen).
   Yes
   \item A clear definition of the specific measure or statistics and error bars (e.g., with respect to the random seed after running experiments multiple times).
   Yes
   \item A description of the computing infrastructure used. (e.g., type of GPUs, internal cluster, or cloud provider).
   Yes
 \end{enumerate}

 \item If you are using existing assets (e.g., code, data, models) or curating/releasing new assets, check if you include:
 \begin{enumerate}
   \item Citations of the creator If your work uses existing assets.
   Yes
   \item The license information of the assets, if applicable.
   Not Applicable
   \item New assets either in the supplemental material or as a URL, if applicable.
   Not Applicable
   \item Information about consent from data providers/curators.
   Not Applicable
   \item Discussion of sensible content if applicable, e.g., personally identifiable information or offensive content.
   Not Applicable
 \end{enumerate}

 \item If you used crowdsourcing or conducted research with human subjects, check if you include:
 \begin{enumerate}
   \item The full text of instructions given to participants and screenshots.
   Not Applicable
   \item Descriptions of potential participant risks, with links to Institutional Review Board (IRB) approvals if applicable.
   Not Applicable
   \item The estimated hourly wage paid to participants and the total amount spent on participant compensation.
   Not Applicable
 \end{enumerate}

 \end{enumerate}
 
\clearpage
\onecolumn
\appendix
\renewcommand{\paragraph}[1]{\textbf{#1} }

\renewcommand \thepart{}
\renewcommand \partname{}

\doparttoc 
\faketableofcontents 

\addcontentsline{toc}{section}{Appendix} 
\part{Appendix for Towards Practical Non-Adversarial Distribution Alignment} 
\parttoc 

\section{More Background and Theory from AUB \citep{cho2022cooperative}} \label{appendix:AUB}
Alignment Upper Bound (AUB), introduced from \cite{cho2022cooperative}, jointly learns invertible deterministic aligners $\g$ with a shared latent distribution $p(\zvec)$, where $\zvec = g(\xvec|\d)$, or equivalently where $q(\zvec|\xvec,\d)$ is a Dirac delta function centered at $g(\xvec|\d)$. 
In this section, we remember several important theorems and lemmas from \cite{cho2022cooperative} based on invertible aligners $\g$.
These provide both background and formal definitions. Our proofs follow a similar structure to those in \citet{cho2022cooperative}.

\begin{theorem}[GJSD Upper Bound from \cite{cho2022cooperative}]
\label{thm:gjsd-upper-bound}
Given a density model class $\pset$, we form a GJSD variational upper bound: 
\begin{align*}
    &\GJSD(\{q(\zvec|\d)\}_{\d=1}^\ndist) \leq \min_{p(\zvec) \in \pset} \Hc(q(\zvec), p(\zvec)) -\E_{q(\d)}[\HH(q(\zvec|\d))] \,,
    \notag
\end{align*}
where $q(\zvec) = \sumd \int q(\xvec,\d) q(\zvec|\xvec,\d) \mathrm{d}\xvec = \sumd q(\d) q(\zvec|\d)$ is the marginal of the encoder distribution and the bound gap is exactly $\min_{p(\zvec)\in \pset}\KL(q(\zvec), p(\zvec))$.
\end{theorem}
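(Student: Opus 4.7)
The plan is to prove the bound by unpacking the definition of GJSD into entropy terms and then replacing the entropy of the mixture with its cross-entropy against any candidate $p(\zvec)$, which is well known to be an upper bound via the non-negativity of KL divergence.

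First I would start from the definition of GJSD given in the Notation paragraph,
\begin{equation*}
\GJSD(\{q(\zvec|\d)\}_{\d=1}^\ndist) = \HH(\E_{q(\d)}[q(\zvec|\d)]) - \E_{q(\d)}[\HH(q(\zvec|\d))],
\end{equation*}
and identify the inner mixture with the marginal encoder distribution $q(\zvec)=\sum_{\d} q(\d) q(\zvec|\d)$, so that the first term becomes $\HH(q(\zvec))$. The negative conditional entropy term $-\E_{q(\d)}[\HH(q(\zvec|\d))]$ already matches the second term on the right-hand side of the claimed bound, so only the entropy term needs to be upper bounded.

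Next I would invoke the Gibbs-type inequality $\HH(q(\zvec)) \leq \Hc(q(\zvec), p(\zvec))$, valid for any density $p(\zvec)$, which follows directly from $\KL(q(\zvec), p(\zvec)) = \Hc(q(\zvec), p(\zvec)) - \HH(q(\zvec)) \geq 0$. Substituting yields
\begin{equation*}
\GJSD(\{q(\zvec|\d)\}_{\d=1}^\ndist) \leq \Hc(q(\zvec), p(\zvec)) - \E_{q(\d)}[\HH(q(\zvec|\d))]
\end{equation*}
for every $p(\zvec)\in\pset$. Taking the infimum over $p(\zvec)\in\pset$ gives the desired upper bound.

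Finally, to identify the bound gap, I would subtract $\GJSD(\{q(\zvec|\d)\}_{\d=1}^\ndist)$ from the right-hand side. The conditional entropy terms cancel and what remains is $\Hc(q(\zvec), p(\zvec)) - \HH(q(\zvec)) = \KL(q(\zvec), p(\zvec))$, so the gap after optimizing $p$ is exactly $\min_{p(\zvec)\in \pset}\KL(q(\zvec), p(\zvec))$, as claimed. There is no serious obstacle here: the argument is essentially a one-line cross-entropy relaxation, and the only things to be careful about are (i) correctly identifying the mixture distribution with $q(\zvec)$ under the encoder distribution, and (ii) making sure the infimum over $\pset$ is interpreted consistently on both sides so that tightness holds whenever $q(\zvec)\in \pset$.
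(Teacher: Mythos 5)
Your proof is correct and is exactly the standard cross-entropy relaxation: identify the mixture with $q(\zvec)$, apply $\HH(q(\zvec)) \leq \Hc(q(\zvec),p(\zvec))$ via non-negativity of KL, and minimize over $\pset$ to read off the gap as $\min_{p(\zvec)\in\pset}\KL(q(\zvec),p(\zvec))$. The paper itself does not reprove this result but imports it from \citet{cho2022cooperative}, whose argument is the same one you give, so there is nothing to add.
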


\begin{lemma}[Entropy Change of Variables from \cite{cho2022cooperative}]
\label{thm:entropy-change-of-variables}
Let $\xvec \sim q(\xvec)$ and $\zvec \triangleq g(\xvec) \sim q(\zvec)$, where $g$ is an invertible transformation. The entropy of $\zvec$ can be decomposed as follows:
\begin{align}
    \HH(q(\zvec)) = \HH(q(\xvec)) + \E_{q(\xvec)}[\log |J_g(\xvec)|] \,,
\end{align}
where $|J_g(\xvec)|$ is the absolute value of the determinant of the Jacobian of $g$.
\end{lemma}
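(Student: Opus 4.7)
The plan is to compute the differential entropy of $q(\zvec)$ directly from its definition, pushing the expectation back to $\xvec$-space via the change-of-variables formula for invertible maps. The two ingredients I would use are: (i) the density transformation rule, which in the form most convenient here says that since $\zvec = g(\xvec)$ with $g$ invertible, evaluating the transformed density at $g(\xvec)$ gives $q(\zvec)\big|_{\zvec = g(\xvec)} = q(\xvec)/|J_g(\xvec)|$; and (ii) the law of the unconscious statistician, $\E_{q(\zvec)}[\phi(\zvec)] = \E_{q(\xvec)}[\phi(g(\xvec))]$, valid for any measurable $\phi$ since $\zvec$ and $g(\xvec)$ are equal in distribution.

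First I would expand $\HH(q(\zvec)) = -\E_{q(\zvec)}[\log q(\zvec)]$ and apply (ii) with $\phi(\zvec) = \log q(\zvec)$, obtaining $\HH(q(\zvec)) = -\E_{q(\xvec)}[\log q(g(\xvec))]$. Next I would substitute identity (i) into the integrand, writing $\log q(g(\xvec)) = \log q(\xvec) - \log |J_g(\xvec)|$, so that
\begin{equation*}
\HH(q(\zvec)) = -\E_{q(\xvec)}[\log q(\xvec)] + \E_{q(\xvec)}[\log |J_g(\xvec)|] = \HH(q(\xvec)) + \E_{q(\xvec)}[\log |J_g(\xvec)|],
\end{equation*}
which is the claimed decomposition.

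Honestly there is no real obstacle here; this is a textbook identity. The only place to be careful is the direction of the Jacobian: because $\zvec = g(\xvec)$, the relevant inverse map $g^{-1}$ has Jacobian determinant $1/|J_g(\xvec)|$ at $\zvec = g(\xvec)$, so the $-\log q(g(\xvec))$ term produces a $+\log|J_g(\xvec)|$ correction rather than a $-\log|J_g(\xvec)|$ one. Invertibility of $g$ (and implicit absolute continuity so that the density transformation and differential entropy are well-defined) is the only hypothesis actually used.
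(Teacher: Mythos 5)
Your proof is correct, and the direction of the Jacobian correction is handled properly: since $q(\zvec)\big|_{\zvec=g(\xvec)} = q(\xvec)/|J_g(\xvec)|$, the $-\log$ in the entropy turns the division into the $+\E_{q(\xvec)}[\log|J_g(\xvec)|]$ term. The paper itself does not reprove this lemma --- it imports it verbatim from \cite{cho2022cooperative} as background --- so there is nothing to diverge from; your argument is the standard one and is consistent with how the paper uses the same change-of-variables identity elsewhere (e.g., in the proof of \autoref{thm:log-determinant}, and as the deterministic limiting case of the probabilistic version in \autoref{thm:probabilistic-entropy-change-of-variables}).
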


\begin{definition}[Alignment Upper Bound Loss from \cite{cho2022cooperative}]
\label{def:alignment-upper-bound-loss}
The alignment upper bound loss for aligner $\g(\xvec|\d)$ that is invertible conditioned on $\d$ is defined as follows:
\begin{align}
    &\mname(\g) \triangleq \min_{p(\zvec) \in \pset} \E_{q(\xvec, \zvec, \d)}\big[-\log \big(|J_{g}(\xvec| \d)|\cdot p(\zvec)\big) \big] \,,
\end{align}
where $\pset$ is a class of shared prior distributions and $|J_{g}(\xvec|\d)|$ is the absolute value of the Jacobian determinant.
\end{definition}

\begin{theorem}[Alignment at Global Minimum of $\mname$ from \cite{cho2022cooperative}]
\label{thm:global-minimum}
If $\mname$ is minimized over the class of all invertible functions, a global minimum of $\mname$ implies that the latent distributions are aligned, i.e., for all $d,d'$, $q(\zvec| d) = q(\zvec| d') \in \pset$. Notably, this result holds regardless of $\pset$. 
\end{theorem}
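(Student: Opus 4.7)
\textbf{Proof plan for \autoref{thm:global-minimum}.} The main strategy is to decompose $\mname(\g)$ into a constant (in $\g$) plus two non-negative divergence terms, so that the global minimum forces each term to vanish. I would proceed as follows.

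First, I would apply the entropy change-of-variables lemma (\autoref{thm:entropy-change-of-variables}) conditioned on $\d$ to rewrite the Jacobian expectation:
\begin{equation*}
-\E_{q(\xvec,\d)}\bigl[\log |J_g(\xvec|\d)|\bigr] \;=\; \E_{q(\d)}[\HH(q(\xvec|\d))] \;-\; \E_{q(\d)}[\HH(q(\zvec|\d))].
\end{equation*}
Plugging this into \autoref{def:alignment-upper-bound-loss} and splitting the cross-entropy as $\Hc(q(\zvec),p(\zvec)) = \HH(q(\zvec)) + \KL(q(\zvec),p(\zvec))$, I obtain the clean decomposition
\begin{equation*}
\mname(\g) \;=\; \underbrace{\E_{q(\d)}[\HH(q(\xvec|\d))]}_{\text{constant in }\g} \;+\; \underbrace{\bigl(\HH(q(\zvec)) - \E_{q(\d)}[\HH(q(\zvec|\d))]\bigr)}_{=\,\GJSD(\{q(\zvec|\d)\})} \;+\; \min_{p(\zvec)\in\pset}\KL(q(\zvec),p(\zvec)).
\end{equation*}

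Second, I would observe that both $\GJSD$ and $\KL$ are non-negative, so the infimum of $\mname(\g)$ over invertible $\g$ is bounded below by the constant $\E_{q(\d)}[\HH(q(\xvec|\d))]$, and any global minimizer must force both non-negative terms to zero simultaneously. The vanishing of $\GJSD$ implies $q(\zvec|\d) = q(\zvec|\d')$ for all $\d,\d'$ (alignment), and the vanishing of the $\KL$ term implies the common marginal $q(\zvec)$ lies in $\pset$; together these give $q(\zvec|\d) = q(\zvec|\d') \in \pset$.

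Third, to justify that the statement is non-vacuous regardless of $\pset$, I would argue existence of a minimizer by picking any fixed $p^* \in \pset$ and constructing, for each domain $\d$, an invertible map $g(\cdot|\d)$ that pushes $q(\xvec|\d)$ forward to $p^*$. For absolutely continuous densities on $\R^n$, such maps exist via Knothe--Rosenblatt rearrangements or optimal transport. At such $\g$, every $q(\zvec|\d)$ equals $p^*$, so both $\GJSD$ and $\KL$ vanish, independent of the choice of $\pset$ (only nonemptiness and the richness of the invertible function class are used).

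\emph{Main obstacle.} The conceptual content is all in the first step: recognizing that the GJSD upper bound of \autoref{thm:gjsd-upper-bound} combines with the change-of-variables lemma to give an additive decomposition into independent non-negative pieces. The only technical subtlety is step three, the existence of invertible pushforwards realizing any $p^* \in \pset$; this is standard under mild regularity assumptions on the data densities, and it is precisely this construction that makes the conclusion "regardless of $\pset$" meaningful.
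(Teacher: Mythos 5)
Your proposal is correct, and it follows essentially the approach the paper relies on: the paper does not reproduce a proof of this cited result, but your decomposition is exactly the one implied by combining \autoref{thm:gjsd-upper-bound} (whose stated bound gap is $\min_{p(\zvec)\in\pset}\KL(q(\zvec),p(\zvec))$) with \autoref{thm:entropy-change-of-variables}, which is the same machinery the paper uses to prove its VAUB analogue. Your step three (existence of invertible pushforwards to a common $p^*\in\pset$, so the lower bound is attained and a global minimizer must zero both non-negative terms) is the right and necessary closing argument for the ``regardless of $\pset$'' claim.
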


\section{Illustration of Noisy JSD for Alleviating Vanishing Gradient and Local Minimum} \label{appendix-NJSD-demo}
We present a toy example illustrating how adding noise to JSD can alleviate plateaus in the theoretic JSD that can cause vanishing gradient issues (\autoref{fig:njsd}(a-b)) and can smooth over local minimum in the optimization landscape (\autoref{fig:njsd}(c-d)).

\begin{figure*}[!ht]
\centering
\begin{subfigure}[t]{0.45\textwidth}
  \includegraphics[width=\linewidth]{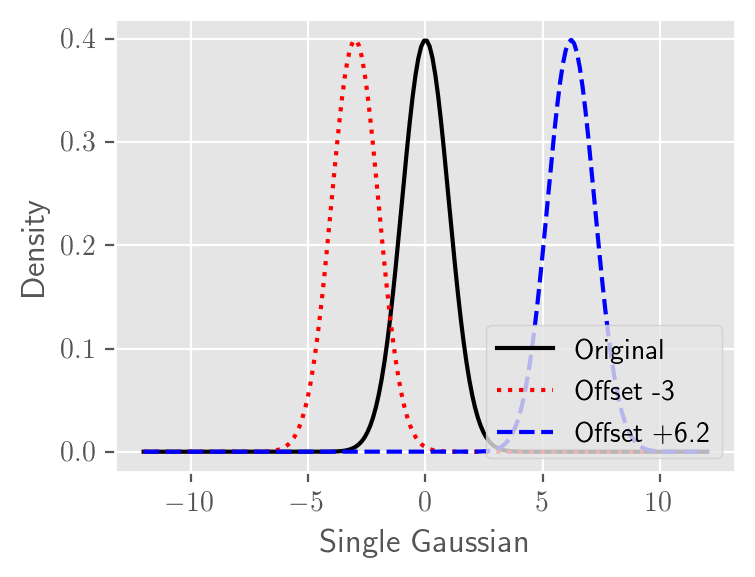}
  \caption{Case 1: Gaussian distributions.}
  \label{subfig:njsd-gaussian}
\end{subfigure}
\begin{subfigure}[t]{0.45\textwidth}
  \includegraphics[width=\linewidth]{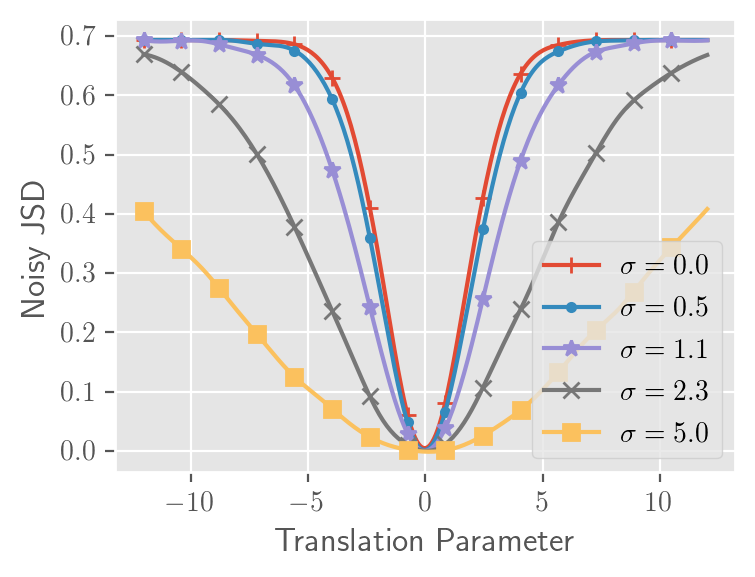}
  \caption{Case 1: Opt. landscape for different noise levels.}
  \label{subfig:njsd-gaussian-optimization}
\end{subfigure}
\begin{subfigure}[t]{0.45\textwidth}
  \includegraphics[width=\linewidth]{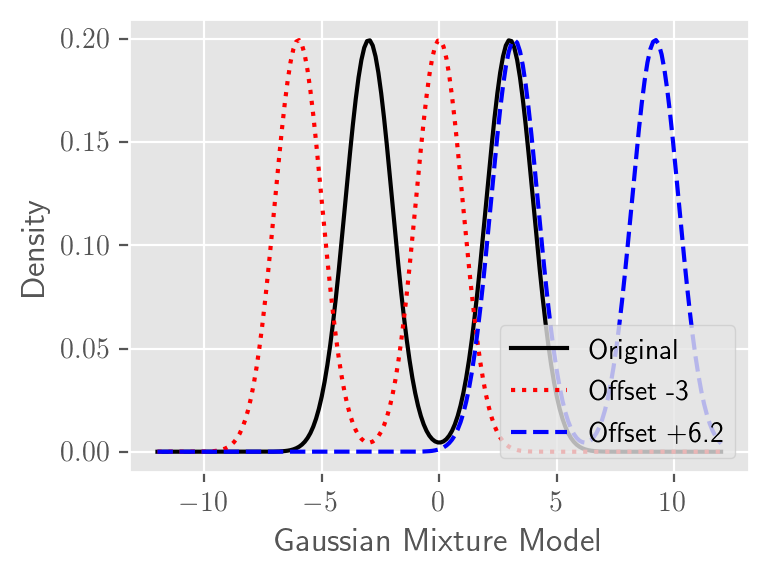}
  \caption{Case 2: Mixture of Gaussian distributions. }
  \label{subfig:njsd-gmm}
\end{subfigure}
\begin{subfigure}[t]{0.45\textwidth}
  \includegraphics[width=\linewidth]{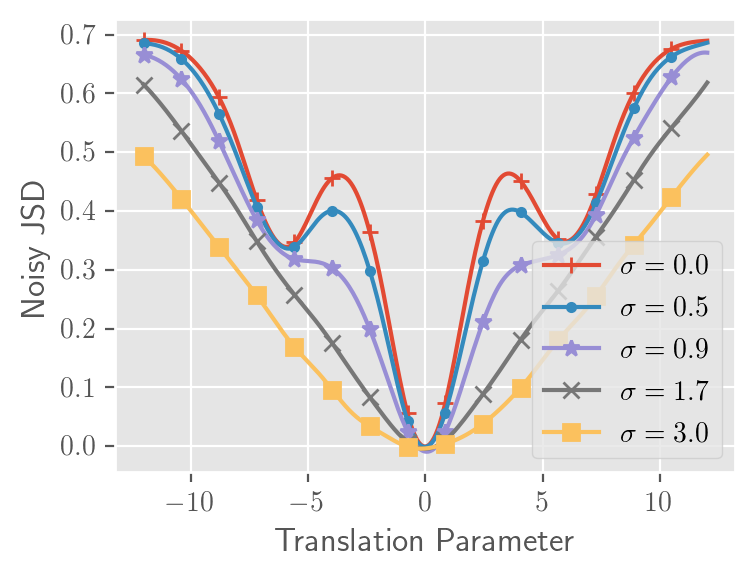}
  \caption{Case 2: Opt. landscape for different noise levels}
  \label{subfig:njsd-gmm-optimization}
\end{subfigure}
\caption{
This figure illustrates that Noisy JSD can reduce the vanishing gradient problem and smooth over local minimum compared to theoretic JSD.
In Case 1 (a), we consider the (Noisy) JSD between two Gaussian distributions whose variance are the same but whose means are different.
The gradient of JSD can vanish to zero as it reaches its maximum value as seen by the plateau regions on the top curve in (b) but this can be alleviated with noise as seen in bottom curves in (b).
In Case 2 (c), we consider the (Noisy) JSD between a Gaussian mixture model where the mixture components are the same but the overall means are different.
For this case, a local minimum of JSD occurs when only one mixture component overlaps as seen in the top curve of (d).
However, Noisy JSD can smooth out this local minimum so that there are no local minimum.
}
\label{fig:njsd}
\end{figure*}

\section{VAE-Based Distribution Alignment Comparison Table} \label{appendix:comparison-table}
We present the full comparison table between VAE-based methods below in \autoref{tab:upper-bound-comparison}. We add the notation where $\phi$ are the parameters of the encoder distributions $q_\phi$ and $\theta$ are the parameters of the decoder distribution $p_\theta$ to emphasize that our VAUB objectives allow training of the prior distribution $p_\theta(\zvec)$, while prior methods assume it is a standard normal distribution.

\begin{table*}[!ht]
    \centering
    \caption{Variational Upper Bounds Comparison: Prior bounds have one or more similarities to our bounds but have several key differences as noted below and explained in this section. $C$ is a constant and CE stands for Contrastive Estimation which equals to $-\log \frac{\exp(f(\zvec,\xvec,d)}{\mathbb{E}_{\zvec'\sim q(\zvec|c)}[\exp(f(\zvec',\xvec,d))]}$.}
    \label{tab:upper-bound-comparison}
    \vspace{-0.5em}
    \begin{tabular}{lll}
        \toprule
        Method & $I(\zvec,\xvec)\leq \cdots$ & $-I(\zvec,\xvec|d) \leq \cdots + C$ \\
        \midrule
        \cite{louizos2015variational} & $\mathbb{E}_{q}[\KL(q_{\phi}(\zvec|\xvec,d), p_\mathcal{N}(\zvec))]$ & $\mathbb{E}_q[-\log p_{\theta}(\xvec|\zvec,d)]$ \\
        \cite{moyer2018invariant} & $\mathbb{E}_{(\xvec,\xvec')\sim q}[\KL(q_{\phi}(\zvec|\xvec), q_{\phi}(\zvec|\xvec'))]$ & $\mathbb{E}_q[-\log p_{\theta}(\xvec|\zvec,d)]$ \\
        \cite{gupta2021controllable} &  $\mathbb{E}_{q}[\KL(q_{\phi}(\zvec|\xvec), p_\mathcal{N}(\zvec))]$ & $\mathbb{E}_{q}[-\textnormal{CE}(\zvec,\xvec,d)]$  \\
        \cite{gupta2021controllable} recon\footnote{The reconstruction-based method was explicitly rejected and not used in \cite{gupta2021controllable} except in an ablation experiment and is thus not representative of the main method proposed by this paper.} & $\mathbb{E}_{q}[\KL(q_{\phi}(\zvec|\xvec), p_\mathcal{N}(\zvec))]$ & $\mathbb{E}_q[-\log p_{\theta}(\xvec|\zvec,d)]$ \\
        \hline
        (ours) VAUB & $\mathbb{E}_{q}[\KL(q_{\phi}(\zvec|\xvec,d), p_\theta(\zvec))]$ & $\mathbb{E}_q[-\log p_{\theta}(\xvec|\zvec,d)]$  \\
        (ours) $\beta$-VAUB ($\beta \leq 1$) & $\mathbb{E}_{q}[\beta \KL(q_{\phi}(\zvec|\xvec,d), p_\theta(\zvec))]$ & $\mathbb{E}_q[-\log p_{\theta}(\xvec|\zvec,d)]$  \\
        (ours) Noisy $\beta$-VAUB & $\mathbb{E}_{q}[
        \underbrace{\beta}_{\text{(2)}} \KL(q_{\phi}(\zvec|\xvec,d), \underbrace{p_\theta}_{\text{(1)}}(\zvec \underbrace{+ \epsilon}_{\text{(3)}}))]$ & $\mathbb{E}_q[-\log p_{\theta}(\xvec|\zvec,d)]$ \\
        \bottomrule
    \end{tabular}
    \vspace{-0.5em}
\end{table*}

\clearpage
\section{Illustration of Differences Between  Adversarial vs VAUB Loss} \label{appendix:adv-vaub-compare}
In \autoref{fig:loss_compare}, we illustrate that using an adversarial loss in DANN does not provide a good measure of test performance while our VAUB-based loss has a strong correlation with test error.

\begin{figure}[ht!]
\minipage{0.45\textwidth}
\centering
  \includegraphics[width=\linewidth]{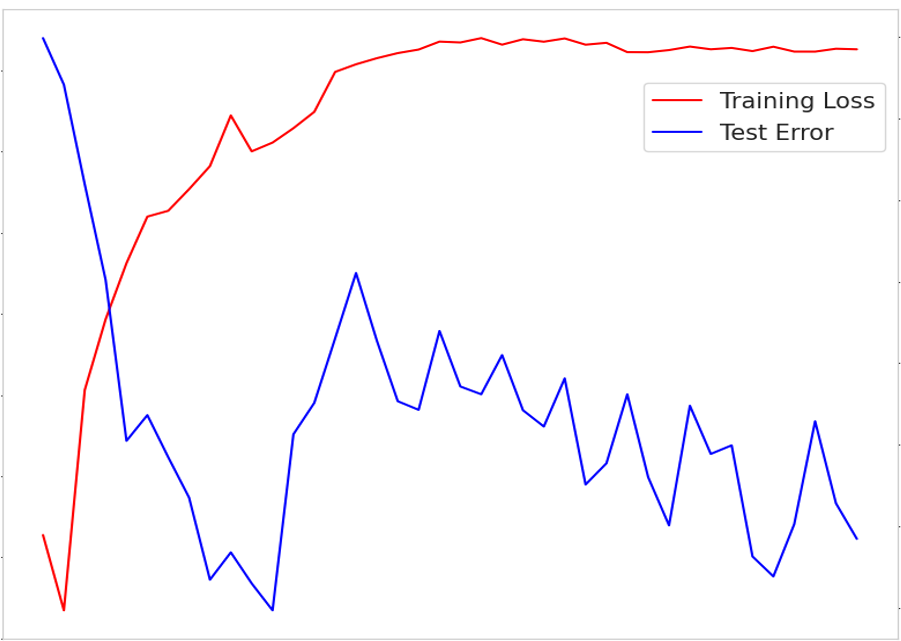}
  \caption*{(a) DANN loss vs test accuracy}
\endminipage \hspace{1.5em}
\minipage{0.45\textwidth}
\centering
  \includegraphics[width=\linewidth]{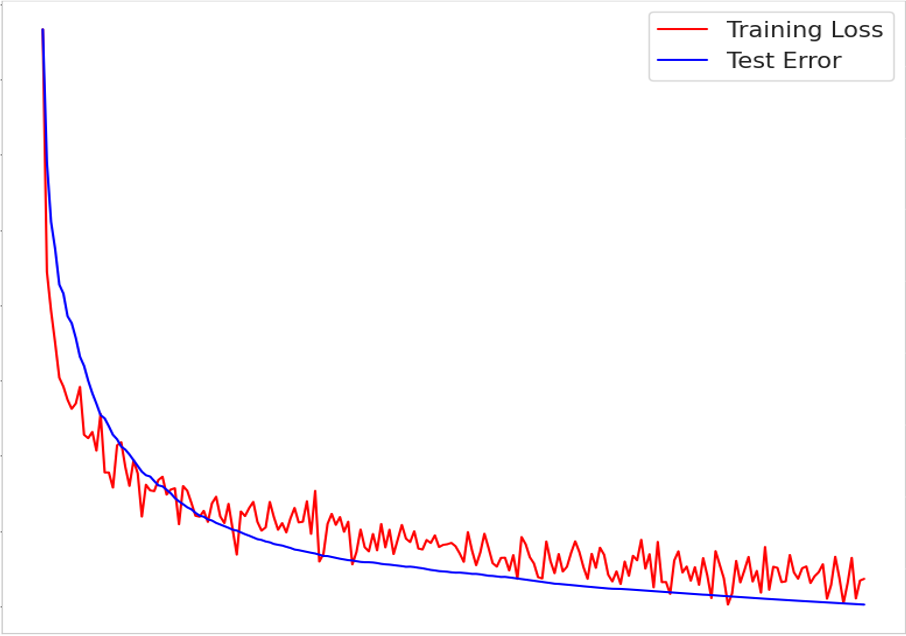}
  \caption*{(b) VAUB loss vs test accuracy}
\endminipage \\
\caption{This figure illustrates the tendency of test error and training loss for both models. For both figures, $x$-axis represents the epochs while the $y$-axis indicates the value of corresponding metric.}
\label{fig:loss_compare}
\end{figure}

\section{Proofs} \label{appendix:proof}
\subsection{Proof that JSD is Invariant Under Invertible Transformations} \label{appendix:proof-JSD-invar}
See proof from Theorem 1 of \citet{tran2021data}.

\subsection{Proof of Entropy Change of Variables For Probabilistic Autoencoders}
An entropy change of variables bound inspired by the derivations of surjective VAE flows \cite{nielsen2020survae} can be derived so that we can apply a similar proof as in \citet{cho2022cooperative}.
\begin{lemma}[Entropy Change of Variables for Probabilistic Autoencoders] 
\label{thm:probabilistic-entropy-change-of-variables}
Given an encoder $q(\zvec|\xvec)$ and a variational decoder $p(\xvec|\zvec)$, the latent entropy can be lower bounded as follows:
\begin{align}
    \HH(q(\zvec)) = \HH(q(\xvec)) 
    + \E_{q(\xvec,\zvec)} \left [\log \frac{p(\xvec|\zvec)}{q(\zvec |\xvec)}\right] +\E_{q(\zvec)}[\KL(q(\xvec|\zvec),p(\xvec|\zvec)] \geq \HH(q(\xvec)) 
    + \E_{q(\xvec,\zvec)} \left [\log \frac{p(\xvec|\zvec)}{q(\zvec |\xvec)}\right] \,,
\end{align}
where the bound gap is exactly $\E_{q(\zvec)}[\KL(q(\xvec|\zvec),p(\xvec|\zvec)]$, which can be made tight if the right hand side is maximized w.r.t. $p(\xvec|\zvec)$. 
\end{lemma}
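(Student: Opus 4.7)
The plan is to prove the identity via a double application of the chain rule for joint entropy, and then obtain the inequality by invoking non-negativity of the KL divergence. Concretely, let $q(\xvec,\zvec) = q(\xvec)q(\zvec|\xvec) = q(\zvec)q(\xvec|\zvec)$ be the joint induced by the encoder. I would first expand $\HH(q(\xvec,\zvec))$ in the two natural ways and equate them:
\begin{equation*}
\HH(q(\xvec)) + \E_{q(\xvec)}[\HH(q(\zvec|\xvec))] = \HH(q(\zvec)) + \E_{q(\zvec)}[\HH(q(\xvec|\zvec))].
\end{equation*}
Solving for $\HH(q(\zvec))$ gives an exact expression involving $\E_{q(\xvec)}[\HH(q(\zvec|\xvec))]$ and $\E_{q(\zvec)}[\HH(q(\xvec|\zvec))]$. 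The first of these two is already in the desired $-\log q(\zvec|\xvec)$ form once rewritten as $\E_{q(\xvec,\zvec)}[-\log q(\zvec|\xvec)]$.

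The second step is the key move: introduce the variational decoder $p(\xvec|\zvec)$ by writing the true conditional entropy as cross-entropy minus a KL, i.e.\ $\HH(q(\xvec|\zvec)) = \Hc(q(\xvec|\zvec), p(\xvec|\zvec)) - \KL(q(\xvec|\zvec), p(\xvec|\zvec))$, and then take the expectation under $q(\zvec)$. The cross-entropy term collapses, by definition, to $\E_{q(\xvec,\zvec)}[-\log p(\xvec|\zvec)]$. Plugging this back into the equation from the previous paragraph and grouping the $-\log q(\zvec|\xvec)$ and $\log p(\xvec|\zvec)$ pieces into a single log-ratio yields exactly
\begin{equation*}
\HH(q(\zvec)) = \HH(q(\xvec)) + \E_{q(\xvec,\zvec)}\!\left[\log \tfrac{p(\xvec|\zvec)}{q(\zvec|\xvec)}\right] + \E_{q(\zvec)}[\KL(q(\xvec|\zvec), p(\xvec|\zvec))],
\end{equation*}
which is the stated equality and identifies the bound gap as $\E_{q(\zvec)}[\KL(q(\xvec|\zvec), p(\xvec|\zvec))]$.

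Finally, the lower bound follows immediately from Gibbs' inequality $\KL(\cdot,\cdot) \geq 0$, applied pointwise in $\zvec$ and then integrated against $q(\zvec)$. Tightness is achieved exactly when $p(\xvec|\zvec) = q(\xvec|\zvec)$ almost everywhere, which is the maximizer over the class of all conditional densities of the right-hand side (since only the $\log p(\xvec|\zvec)$ term depends on $p$, and maximizing $\E_{q(\xvec,\zvec)}[\log p(\xvec|\zvec)]$ in $p$ is a standard cross-entropy maximization whose solution is $p=q$). I do not expect any genuine obstacle: the entire argument is bookkeeping with the chain rule of entropy plus the standard cross-entropy/KL decomposition. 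The only subtlety worth flagging is that the derivation implicitly assumes the relevant densities exist and the integrals are finite, which is consistent with the standing setup of the paper where all distributions admit densities and $p(\xvec|\zvec)$ is absolutely continuous with respect to $q(\xvec|\zvec)$ on the support where the expectation is taken.
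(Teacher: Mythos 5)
Your proposal is correct and is essentially the paper's own argument in lightly reorganized form: the paper carries out the same computation by expanding $\HH(q(\xvec))=\E[-\log q(\xvec)]$, applying Bayes' rule $q(\zvec|\xvec)q(\xvec)=q(\zvec)q(\xvec|\zvec)$ (which is exactly your entropy chain rule in expectation), inflating by $p(\xvec|\zvec)$, and extracting the same KL gap, with tightness at $p(\xvec|\zvec)=q(\xvec|\zvec)$. No gaps; your regularity caveat matches the paper's standing assumptions.
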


\begin{proof}

Similar to the bounds for ELBO, we inflate by both the encoder $q(z|x)$ and the decoder $p(x|z)$ and eventually bring out an expectation over a KL term.
\begin{align}
\HH(q(x)) &= \E_{q(x)}[-\log q(x)] \\
&= \E_{q(x)q(z|x)}[-\log q(x)] \\
&= \E_{q(x)q(z|x)}[\log q(z|x) - \log q(z|x)q(x) ] \\
&= \E_{q(x)q(z|x)}[\log q(z|x) - \log q(z)q(x|z) ] \\
&= \E_{q(x)q(z|x)}[-\log q(z) + \log q(z|x) - \log q(x|z) ] \\
&= \HH(q(z)) + \E_{q(x)q(z|x)}\left[\log \frac{q(z|x)}{q(x|z)}\right]  \\
&= \HH(q(z)) + \E_{q(x)q(z|x)}\left[\log \frac{q(z|x)p(x|z)}{p(x|z)q(x|z)}\right]  \\
&= \HH(q(z)) + \E_{q(x)q(z|x)}\left[\log \frac{q(z|x)}{p(x|z)}\right] + \E_{q(x)q(z|x)}\left[\log \frac{p(x|z)}{q(x|z)}\right] \\
&= \HH(q(z)) + \E_{q(x)q(z|x)}\left[\log \frac{q(z|x)}{p(x|z)}\right] - \E_{q(z)}\left[\E_{q(x|z)}\left[\log \frac{q(x|z)}{p(x|z)}\right]\right]\\
&= \HH(q(z)) + \E_{q(x)q(z|x)}\left[\log \frac{q(z|x)}{p(x|z)}\right] - \E_{q(z)}[\KL(q(x|z),p(x|z))] \,.
\end{align}
By rearranging the above equation, we can easily derive the result from the non-negativity of KL:
\begin{align}
    \HH(q(z)) 
    &= \HH(q(x)) - \E_{q(x)q(z|x)}\left[\log \frac{q(z|x)}{p(x|z)}\right]  +\E_{q(z)}[\KL(q(x|z),p(x|z))]  \\
    &= \HH(q(x)) + \E_{q(x)q(z|x)}\left[\log \frac{p(x|z)}{q(z|x)}\right]  +\E_{q(z)}[\KL(q(x|z),p(x|z))] \\
    &\geq \HH(q(x)) + \E_{q(x)q(z|x)}\left[\log \frac{p(x|z)}{q(z|x)}\right] \,,
\end{align}
where it is clear that the bound gap is exactly $\E_{q(z)}[\KL(q(x|z),p(x|z))]$.
Furthermore, we note that by maximizing over all possible $p(x|z)$ (or minimizing the negative objective), we can make the bound tight:
\begin{align}
    &\argmax_{p(x|z)} \HH(q(x)) + \E_{q(x)q(z|x)}\left[\log \frac{p(x|z)}{q(z|x)}\right] \\
    &=\argmax_{p(x|z)} \E_{q(x)q(z|x)}[\log p(x|z)] \\
    &=\argmin_{p(x|z)} \E_{q(x)q(z|x)}[-\log p(x|z)] \\
    &=\argmin_{p(x|z)} \E_{q(x)q(z|x)}[-\log p(x|z) + \log q(x|z)] \\
    &=\argmin_{p(x|z)} \E_{q(z)}[\KL(q(x|z), p(x|z))] \,,
\end{align}
where the minimum is clearly when $p(x|z) = q(x|z)$ and the KL terms become 0.
Thus, the gap can be reduced by maximizing the right hand side w.r.t. $p(x|z)$ and can be made tight if $p(x|z) = q(x|z)$.
\end{proof}

\subsection{Proof of \autoref{thm:vaub-upper-bound} (VAUB is an Upper Bound on GJSD)}
\begin{proof}
    The proof is straightforward using \autoref{thm:gjsd-upper-bound} from \citet{cho2022cooperative} and \autoref{thm:probabilistic-entropy-change-of-variables} applied to each domain-conditional distribution $q(\zvec|\d)$:
    \begin{align}
        &\GJSD(\{q(\zvec|\d)\}_{\d=1}^\ndist) \\
        &\leq \min_{p(\zvec)} \Hc(q(\zvec), p(\zvec)) -\E_{q(\d)}[\HH(q(\zvec|\d))] \tag{\autoref{thm:gjsd-upper-bound}} \\ 
        &\leq \min_{\substack{p(\zvec) \\ p(\xvec|\zvec,\d)}} \Hc(q(\zvec), p(\zvec)) -\E_{q(\d)}\left[\HH(q(\xvec|\d)) + \E_{q(\xvec|\d)q(\zvec|\xvec,\d)}\left[\log \frac{p(\xvec|\zvec,\d)}{q(\zvec|\xvec,\d)}\right]\right] \tag{\autoref{thm:probabilistic-entropy-change-of-variables}} \\ 
        &= \min_{\substack{p(\zvec) \\ p(\xvec|\zvec,\d)}} \E_q[-\log p(\zvec)] -\E_{q(\d)}[\HH(q(\xvec|\d))] + \E_{q}\left[-\log \frac{p(\xvec|\zvec,\d)}{q(\zvec|\xvec,\d)}\right] \\ 
        &= \min_{\substack{p(\zvec) \\ p(\xvec|\zvec,\d)}} \E_q\left[-\log \left(\frac{p(\xvec|\zvec,\d)}{q(\zvec|\xvec,\d)}\cdot p(\zvec)\right)\right] -\E_{q(\d)}[\HH(q(\xvec|\d))] \\ 
        &\triangleq \mathrm{VAUB}(q(\zvec|\xvec,\d)) \,,
    \end{align}
    where the last two equals are just by definition of cross entropy and rearrangement of terms.
    From \autoref{thm:gjsd-upper-bound} and \autoref{thm:probabilistic-entropy-change-of-variables}, we know that the bound gaps for both inequalities is:
    \begin{align}
        \KL(q(\zvec), p(\zvec)) + \E_{q(\d)q(\zvec|\d)}[\KL(q(\xvec|\zvec,\d),p(\xvec|\zvec,\d))] \,,
    \end{align}
    where both KL terms can be made 0 by minimizing the VAUB over all possible $p(\zvec)$ and $p(\xvec|\zvec,\d)$ respectively.
\end{proof}

\subsection{Proof of \autoref{thm:log-determinant} (Ratio of Decoder and Encoder Term Interpretation)}
\begin{proof}
Given that the variational optimization is solved perfectly so that $p^*(\xvec|\zvec,\d) = q(\xvec|\zvec,\d)$, we can notice that this ratio has a simple form as the ratio of marginal densities:
\begin{align}
    \E_{q(\zvec|\xvec,\d)}\Big[-\log \frac{p^*(\xvec|\zvec,\d)}{q(\zvec|\xvec,\d)}\Big] 
    = \E_q\Big[-\log \frac{q(\xvec|\zvec,\d)}{q(\zvec|\xvec,\d)} \Big]
    = \E_q\Big[-\log\frac{q(\xvec,\zvec|\d)}{q(\zvec|\d)}\frac{q(\xvec|\d)}{q(\xvec,\zvec|\d)}\Big]
    = \E_q\Big[-\log\frac{q(\xvec|\d)}{q(\zvec|\d)}\Big]\,,
\end{align}
where the first equals is just by assumption that $p^*(\xvec|\zvec,\d)$ is optimized perfectly.
Now if the encoder is an invertible and deterministic function, i.e., $q(\zvec|\xvec,\d)$ is a Dirac delta at $g(\xvec|\d)$, then we can derive that the marginal ratio is simply the Jacobian in this special case: 
\begin{align}
    \E_{q(\zvec|\xvec,\d)}\Big[-\log\frac{q(\xvec|\d)}{q(\zvec|\d)}\Big]
    =-\log\frac{q(\xvec|\d)}{q(\zvec=g(\xvec|\d)|\d)}
    =-\log\frac{|J_g(\xvec|\d)| q(\zvec=g(\xvec|\d)|\d)}{q(\zvec=g(\xvec|\d)|\d)}
    =-\log |J_g(\xvec|\d)| \,,
\end{align}
where the first equals is because the encoder $q(\zvec|\xvec,\d)$ is a Dirac delta function such that $\zvec = g(\xvec|\d)$, and the second equals is by the change of variables formula.
\end{proof}

\subsection{Proof that Mutual Information is Bounded by Reconstruction Term}

We include a simple proof that the mutual information can be bounded by a probabilistic reconstruction term.
\begin{proof}
For any $\tilde{p}(\xvec|\zvec,\d)$, we know the following:
\begin{align}
    I(\xvec, \zvec|\d)
    &=\HH(\xvec|\d)-\HH(\xvec|\zvec, \d)  \\
    &=\HH(\xvec|\d) -\E_q[-\log q(\xvec|\zvec,\d)]  \\
    &=\HH(\xvec|\d) +\E_q\big[\log \frac{q(\xvec|\zvec,\d)\tilde{p}(\xvec|\zvec,\d)}{\tilde{p}(\xvec|\zvec,\d)}\big]  \\
    &=\E_{q(\xvec|\zvec)}[\log \tilde{p}(\xvec|\zvec,\d)] + \KL(q(\xvec|\zvec,\d), \tilde{p}(\xvec|\zvec,\d)) + \HH(\xvec|\d) \\
    &\geq\E_{q(\xvec|\zvec)}[\log \tilde{p}(\xvec|\zvec,\d)] + C\,,
\end{align}
where $C\triangleq \HH(\xvec|\d)$, which is constant in the optimization.
Therefore, we can optimize over all $\tilde{p}$ and still get a lower bound on mutual information:
\begin{align}
    I(\xvec, \zvec|\d) \geq \max_{\tilde{p}(\xvec|\zvec,\d)} \E_{q(\xvec|\zvec)}[\log \tilde{p}(\xvec|\zvec,\d)] + C
\end{align}
where $C$ in constant w.r.t. to the parameters of interest.
\end{proof}
%
%

\subsection{Proof of \autoref{thm:nsj-is-divergence} (Noisy JSD is a Statistical Divergence)}

We prove a slightly more general version of noisy JSD here where the added Gaussian noise can come from a distribution over noise levels.
While the Noisy JSD definition uses a single noise value, this can be generalized to an expectation over different noise scales as in the next definition.

\begin{definition}[Noised-Smoothed JSD]
Given a distribution over noise variances $p_\sigma$ that has support on the positive real numbers, the noise-smoothed JSD (NSJ) is defined as:
\begin{align}
    \mathrm{NSJ}(p,q) 
    &= \E_{\sigma}[\mathrm{NJSD}_{\sigma}(p, q)]
    = \E_{\sigma}[\mathrm{JSD}(\tilde{p}_\sigma, \tilde{q}_\sigma) ] \,,
    \label{eqn:njs-as-jsd}
\end{align}
where $\tilde{p}_{\sigma} \triangleq p \ast \mathcal{N}(0, \sigma^2 I)$ and similarly for $\tilde{q}_{\sigma}$.
\end{definition}
Note that NJSD is a special case of NSJ where $p_{\sigma}$ is a Dirac delta distribution at a single $\sigma$ value.
Now we give the proof that NSJ (and thus NJSD) is a statistical divergence.

\begin{proof}
    NSJ is non-negative because Eqn.6 (in main paper) is merely an expectation over JSDs, which are non-negative by the property of JSD.
    Now we prove the identity property for divergences, i.e., that $NSJ(p,q) = 0 \Leftrightarrow p = q$.
    If $p=q$, then it is simple to see that all the inner JSD terms will be 0 and thus $NSJ(p,q)=0$. 
    For the other direction, we note that if $NSJ(p,q) = 0$, we know that every NJSD term in the expectation in Eqn.6 (in main paper) must be 0, i.e., $\forall \sigma \in \textnormal{supp}(p_\sigma), NJSD(p,q)=0$.
    Thus, we only need to prove for NJSD. 
    For NJSD, we note that convolution with a Gaussian kernel $k \triangleq \mathcal{N}(0, \sigma^2 I)$ is invertible, and thus:
    \begin{align}
        &NJSD(p,q) = 0 \Rightarrow JSD(p \ast k, q \ast k) = 0 \\
        &\Rightarrow p \ast k = q \ast k \Rightarrow p = q \,.
    \end{align} 
\end{proof}

\subsection{Proof of \autoref{thm:noisy-upper-bounds} (Noisy AUB and Noisy VAUB Upper Bounds)}

We would like to show that the noisy JSD can be upper bounded by a noisy version of AUB.
Again, the key here is considering the latent entropy terms. So we provide one lemma and a corollary to setup the main proof.
\begin{lemma}[Noisy entropy inequality]
\label{thm:noisy-entropy-inequality}
The entropy of a noisy random variable is greater than the entropy of its clean counterpart, i.e., if $\tilde{z}\triangleq z + \epsilon \sim q(\tilde{z})$ where $z\sim q(z)$ and $\epsilon$ are independent random variables, then $\HH(q(\tilde{z})) \geq \HH(q(z))$. (Proof are provided in the appendix)
\end{lemma}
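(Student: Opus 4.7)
The plan is to exploit the standard fact from information theory that conditioning never increases (differential) entropy. Specifically, I would use the chain of (in)equalities
\begin{align*}
\HH(q(\tilde{z})) = \HH(z + \epsilon) \geq \HH(z + \epsilon \mid \epsilon) = \HH(z \mid \epsilon) = \HH(q(z)),
\end{align*}
where the inequality is ``conditioning reduces entropy,'' the second equality uses the fact that shifting a random variable by a constant leaves its differential entropy unchanged (so conditioning on $\epsilon$ makes $z+\epsilon$ just a translated copy of $z$), and the last equality invokes the independence of $z$ and $\epsilon$.

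Concretely, the steps are: first, note that $z$ and $\epsilon$ are independent by assumption, so $\HH(z \mid \epsilon) = \HH(z)$. Second, observe that for any fixed realization $\epsilon = e$, the conditional density of $\tilde{z}$ given $\epsilon = e$ is simply the density of $z$ shifted by $e$; differential entropy is translation-invariant, so $\HH(\tilde{z} \mid \epsilon = e) = \HH(z)$ for each $e$, giving $\HH(\tilde{z} \mid \epsilon) = \HH(z)$. Third, apply the standard inequality $\HH(\tilde{z}) \geq \HH(\tilde{z} \mid \epsilon)$, which follows from the non-negativity of mutual information $I(\tilde{z}; \epsilon) = \HH(\tilde{z}) - \HH(\tilde{z}\mid \epsilon) \geq 0$.

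I do not anticipate a substantive obstacle here; the statement is essentially a one-line corollary of two textbook properties of differential entropy (translation invariance and conditioning-reduces-entropy). The only mild subtlety is that the quantities being discussed are differential entropies, for which conditioning-reduces-entropy still holds (via non-negativity of the mutual information/KL divergence) provided the relevant densities exist. Since $\tilde{z}$ is a convolution of $q(z)$ with the distribution of $\epsilon$, we may assume existence of the density and finiteness of the entropies, and the argument goes through verbatim.
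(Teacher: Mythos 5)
Your proposal is correct and follows exactly the same chain of reasoning as the paper's proof: $\HH(z+\epsilon) \geq \HH(z+\epsilon\mid\epsilon) = \HH(z\mid\epsilon) = \HH(z)$, justified by conditioning-reduces-entropy, translation invariance of differential entropy, and independence of $z$ and $\epsilon$. Your added remarks on the differential-entropy subtleties (existence of densities via convolution, non-negativity of mutual information) are a sensible elaboration but do not change the argument.
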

\begin{proof}
\begin{align}
    \HH(z + \epsilon) 
    &\geq \HH(z + \epsilon|\epsilon) \tag{Conditioning reduces entropy} \\
    &=\HH(z|\epsilon) \tag{Entropy is invariant under constant shift}\\
    &=\HH(z)\,. \tag{Independence of $z$ and $\epsilon$}
\end{align}
\end{proof}

\begin{corollary}[Noisy entropy inequalities]
Given a noisy random variable $\tilde{z} \triangleq z + \epsilon \sim q(\tilde{z})$,  
the following inequalities hold for deterministic invertible mappings $g$ and stochastic mappings $q(z|x)$, respectively:
\begin{align}
    \HH(q(\tilde{z})) &\geq \HH(q(z)) \geq \HH(q(x)) + \E_{q(x)}[\log |J_g(x)|] \\ 
    \HH(q(\tilde{z})) &\geq \HH(q(z)) \geq \HH(q(x)) + \E_{q(x)q(z|x)}\left[\log \frac{p(x|z)}{q(z|x)}\right] \,.
\end{align}
\end{corollary}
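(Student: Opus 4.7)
The statement is a corollary, so the plan is to simply chain together the three preceding results rather than doing any fresh analysis. I will split each of the two displayed inequalities into its two component inequalities and cite the appropriate lemma for each piece.

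For both displayed inequalities, the leftmost step $\HH(q(\tilde{z})) \geq \HH(q(z))$ is exactly \autoref{thm:noisy-entropy-inequality} applied to the random variable $z$ (whether $z$ was produced deterministically as $g(x)$ or stochastically from $q(z|x)$ does not matter, since the lemma only requires that $\epsilon$ be independent of $z$). So both leftmost inequalities reduce to a direct citation.

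For the rightmost step in the invertible-mapping line, I would invoke \autoref{thm:entropy-change-of-variables} from \cite{cho2022cooperative}, which gives the exact identity $\HH(q(z)) = \HH(q(x)) + \E_{q(x)}[\log |J_g(x)|]$; the stated $\geq$ is then immediate (in fact it holds with equality). For the rightmost step in the stochastic-mapping line, I would invoke \autoref{thm:probabilistic-entropy-change-of-variables} applied to the encoder/decoder pair $q(z|x), p(x|z)$, which directly yields
\begin{equation*}
\HH(q(z)) \;\geq\; \HH(q(x)) + \E_{q(x)q(z|x)}\!\left[\log \frac{p(x|z)}{q(z|x)}\right],
\end{equation*}
with gap exactly $\E_{q(z)}[\KL(q(x|z),p(x|z))] \geq 0$.

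Combining these two citations in each line finishes the corollary. There is essentially no technical obstacle: the work was done in the three prior results, and the corollary's only role is to line them up in the form that will be used when bounding noisy JSD in \autoref{thm:noisy-upper-bounds}. The only minor bookkeeping item is to note, when chaining, that both $g$ (invertible) and $q(z|x)$ (stochastic) produce a random variable $z$ independent of the noise $\epsilon$, so \autoref{thm:noisy-entropy-inequality} legitimately applies in each setting; this is immediate because $\epsilon$ is introduced externally and is, by construction, independent of $x$ (and hence of any measurable function or conditional sample built from $x$).
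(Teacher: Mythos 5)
Your proposal is correct and matches the paper's own proof, which likewise derives the corollary by combining \autoref{thm:noisy-entropy-inequality} with the entropy identities/inequalities of \autoref{thm:entropy-change-of-variables} and \autoref{thm:probabilistic-entropy-change-of-variables}. Your added remark on the independence of $\epsilon$ from $z$ in both the deterministic and stochastic settings is a harmless clarification of the same argument.
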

\begin{proof}
    Inequalities follow directly from \autoref{thm:noisy-entropy-inequality} and the entropy inequalities in \autoref{thm:entropy-change-of-variables} and \autoref{thm:probabilistic-entropy-change-of-variables} respectively.
\end{proof}

Given these entropy inequalities, we now provide the proof that NAUB and NVAUB are upper bounds of the noisy JSD counterparts using the same techniques as \citep{cho2022cooperative} and the proof for VAUB above.

\begin{proof}
Proof of upper bound for noisy version of flow-based AUB:
\begin{align}
    \textnormal{NGJSD}&(\{q(\zvec|\d)\}_{\d=1}^\ndist; \sigma^2) \\
    &\equiv \GJSD(\{q(\tilde{\zvec}|\d)\}_{\d=1}^\ndist) \\
    &\leq \min_{p(\tilde{\zvec}) \in \pset_{\tilde{\zvec}}} \Hc(q(\tilde{\zvec}), p(\tilde{\zvec})) -\E_{q(\d)}[\HH(q(\tilde{\zvec}|\d))] \\ 
    &\leq \min_{p(\tilde{\zvec}) \in \pset_{\tilde{\zvec}}} \Hc(q(\tilde{\zvec}), p(\tilde{\zvec})) -\E_{q(\d)}[\E_{q(x|d)}[\log |J_g(\xvec|d)|]  + \HH(q(\xvec|\d))] \\
    &= \min_{p(\tilde{\zvec}) \in \pset_{\tilde{\zvec}}} \E_{q(\tilde{\zvec})}[-\log p(\tilde{\zvec})] -\E_{q(\xvec,\d)}[\log |J_g(\xvec|d)|]  - \E_{q(d)}[\HH(q(\xvec|\d))] \\
    &= \min_{p(\tilde{\zvec}) \in \pset_{\tilde{\zvec}}} \E_{q(\xvec,d)q(\epsilon;\sigma^2)}[-\log |J_g(\xvec|d)| p(g(\xvec|d) + \epsilon)] - \E_{q(d)}[\HH(q(\xvec|\d))] \\
    &\triangleq \textnormal{NAUB}(q(\zvec|\xvec,\d); \sigma^2) \,.
\end{align}

Proof of upper bound for noisy version of VAUB:
\begin{align}
    \textnormal{NGJSD}&(\{q(\zvec|\d)\}_{\d=1}^\ndist; \sigma^2) \\
    &\equiv \GJSD(\{q(\tilde{\zvec}|\d)\}_{\d=1}^\ndist) \\
    &\leq \min_{p(\tilde{\zvec}) \in \pset_{\tilde{\zvec}}} \Hc(q(\tilde{\zvec}), p(\tilde{\zvec})) -\E_{q(\d)}[\HH(q(\tilde{\zvec}|\d))] \\ 
    &\leq \min_{p(\tilde{\zvec}) \in \pset_{\tilde{\zvec}}} \Hc(q(\tilde{\zvec}), p(\tilde{\zvec})) -\E_{q(\d)} \left[\max_{p(\xvec|\zvec,\d) \in \pset_{\xvec|\zvec,\d}} 
    \E_{q(\xvec,\zvec|\d)} \left [\log \frac{p(\xvec|\zvec, \d)}{q(\zvec |\xvec, \d)}\right] + \HH(q(\xvec|\d)) \right] \\
    &= \min_{\substack{p(\tilde{\zvec}) \in \pset_{\tilde{\zvec}} \\
    p(\xvec|\zvec,\d) \in \pset_{\xvec|\zvec,\d}}} 
    \E_{q(\tilde{\zvec})}[-\log p(\tilde{\zvec})] - \E_{q(\d)} \left[\E_{q(\xvec,\zvec|\d)}\left[\log \frac{p(\xvec|\zvec, \d)}{q(\zvec |\xvec, \d)}\right] + \HH(q(\xvec|\d)) \right] \\
    &= \min_{\substack{p(\tilde{\zvec}) \in \pset_{\tilde{\zvec}} \\ 
    p(\xvec|\zvec,\d) \in \pset_{\xvec|\zvec,\d}}} 
    \E_{q(\xvec,\zvec,\d,\tilde{\zvec})}\left[-\log \left(\frac{p(\xvec|\zvec, \d)}{q(\zvec |\xvec, \d)}\cdot p(\tilde{\zvec})\right)\right] - \E_{q(\d)}[ \HH(q(\xvec|\d))] \\
    &= \min_{\substack{p(\tilde{\zvec}) \in \pset_{\tilde{\zvec}} \\ 
    p(\xvec|\zvec,\d) \in \pset_{\xvec|\zvec,\d}}} 
    \E_{q(\xvec,\zvec,\d)q(\epsilon;\sigma^2)}\left[-\log \left(\frac{p(\xvec|\zvec, \d)}{q(\zvec |\xvec, \d)}\cdot p(\zvec + \epsilon)\right)\right] - \E_{q(\d)}[ \HH(q(\xvec|\d))] \\
    &\triangleq \textnormal{NVAUB}(q(\zvec|\xvec,\d); \sigma^2) \,.
\end{align}
\end{proof}

\subsection{Proof of \autoref{thm:fixed-prior} (Fixed Prior is VAUB Plus Regularization Term)}
We first remember the Fair VAE objective \citep{louizos2015variational} (note $q$ is encoder distribution and $p$ is decoder distribution and $q(\zvec)$ is the marginal distribution of $q(\xvec,d,\zvec):=q(\xvec,d)q(\zvec|\xvec,d)$):
\begin{align}
    &\min_{q(\zvec|\xvec,d)} \min_{p(\xvec|\zvec,d)} \E_{q}\big[-\log \frac{p(\xvec|\zvec,d)}{q(\zvec|\xvec,d)} p_{\mathcal{N}(0,I)}(\zvec) \big] 
\end{align}
We show that this objective can be decomposed into an alignment objective and a prior regularization term if we assume the optimization class of prior distributions from the VAUB includes all possible distributions (and is solved theoretically).
This gives a precise characterization of how the Fair VAE bound w.r.t. alignment and compares it to the VAUB alignment objective.

This decomposition exposes two insights.
First, the Fair VAE objective is an alignment loss plus a regularization term.
Thus, the \emph{Fair VAE objective is sufficient for alignment but not necessary}---it adds an additional constraint/regularization that is not necessary for alignment.
Second, it reveals that by fixing the prior distribution, it can be viewed as perfectly solving the optimization over the prior for the alignment objective but requiring an unnecessary prior regularization.
Finally, it should be noted that it is not possible in practice to compute the prior regularization term because $q(\zvec)$ is not known.
Therefore, this decomposition is only useful to understand the structure of he objective theoretically.

\begin{proof}[Proof of \autoref{thm:fixed-prior}]
\begin{align}
    &\min_{q(\zvec|\xvec,d)} \min_{p(\xvec|\zvec,d)} \E_{q}\big[-\log \frac{p(\xvec|\zvec,d)}{q(\zvec|\xvec,d)} p_{\mathcal{N}(0,I)}(\zvec) \big] \tag{Fair VAE}\\
    &=\min_{q(\zvec|\xvec,d)} \min_{p(\xvec|\zvec,d)} \E_{q}\big[-\log \frac{p(\xvec|\zvec,d)}{q(\zvec|\xvec,d)} \frac{p_{\mathcal{N}(0,I)}(\zvec) q(\zvec)}{q(\zvec)} \big] \tag{Inflate with true marginal $q(\zvec)$} \\
    &=\min_{q(\zvec|\xvec,d)} \min_{p(\xvec|\zvec,d)} \E_{q}\big[-\log \frac{p(\xvec|\zvec,d)}{q(\zvec|\xvec,d)} q(\zvec) \big] + \E_{q}\big[ -\log \frac{p_{\mathcal{N}(0,I)}(\zvec)}{q(\zvec)} \big] \tag{Rearrange} \\
    &=\min_{q(\zvec|\xvec,d)} \min_{p(\xvec|\zvec,d)} \E_{q}\big[-\log \frac{p(\xvec|\zvec,d)}{q(\zvec|\xvec,d)} q(\zvec) \big] + \E_{q}\big[ \log \frac{q(\zvec)}{p_{\mathcal{N}(0,I)}(\zvec)} \big] \tag{Push negative inside}\\
    &=\min_{q(\zvec|\xvec,d)} \underbrace{\min_{p(\xvec|\zvec,d)} \E_{q}\big[-\log \frac{p(\xvec|\zvec,d)}{q(\zvec|\xvec,d)} q(\zvec) \big]}_{\text{VAUB Alignment with perfect prior optimization}} + \underbrace{\KL(q(\zvec),p_{\mathcal{N}(0,I)}(\zvec))}_{\text{Prior regularization}} \tag{Definition of KL} \\
    &=\min_{q(\zvec|\xvec,d)} \min_{p(\xvec|\zvec,d)} \Big( \min_{p(\zvec)} \E_{q}\big[-\log \frac{p(\xvec|\zvec,d)}{q(\zvec|\xvec,d)} p(\zvec) \big] \Big) + \KL(q(\zvec),p_{\mathcal{N}(0,I)}(\zvec)) \tag{Replace $q(\zvec)$ with optimization over $p(\zvec)$}  \\
    &=\min_{q(\zvec|\xvec,d)} \underbrace{\Big(\min_{p(\xvec|\zvec,d)} \min_{p(\zvec)} \E_{q}\big[-\log \frac{p(\xvec|\zvec,d)}{q(\zvec|\xvec,d)} p(\zvec) \big] \Big)}_{\text{VAUB Alignment Objective}} + \underbrace{\KL(q(\zvec),p_{\mathcal{N}(0,I)}(\zvec))}_{\text{Prior Regularization}}  \tag{Regroup to show structure}
\end{align}
The last line is by noticing that $\KL(q(\zvec),p_{\mathcal{N}(0,I)}(\zvec))$ does not depend on $p(\xvec|\zvec,d)$ or $p(\zvec)$, i.e., it only depends on $q(\zvec|\xvec,d)$ and the original data distribution $q(\xvec,d)$.
\end{proof}

\section{Experimental Setup} \label{appendix:experiment-setup}
All experiments were conducted on a computing setup with 24 processors, each having 12 cores running at 3.5GHz. Additionally, 2 NVIDIA RTX 3090 graphics cards were utilized when needed.
\subsection{Simulated Experiments}
\subsubsection{Non-Matching Dimensions between Latent Space and Input Space}
\paragraph{Dataset:} 
We have two datasets, namely $X_1$ and $X_2$, each consisting of 500 samples.
$X_1$ represents the original moon dataset, which has been perturbed by adding a noise scale of 0.05. 
$X_2$ is created by applying a transformation to the moon dataset. First, a rotation matrix of $\frac{3\pi}{8}$ is applied to the moons dataset which is generated using the same noise scale as in $X_1$. Then, scaling factors of 0.75 and 1.25 are applied independently to each dimension of the dataset. This results in a rotated-scaled version of the original moon dataset distribution.

\paragraph{Model:} Encoders consist of three fully connected layers with hidden layer size as $20$. Decoders are the reverse setup of the encoders. $P_z$ is a learnable one-dimensional mixture of Gaussian distribution with $10$ components and diagonal covariance matrix.

\subsubsection{Noisy-AUB Helps Mitigate the Vanishing Gradient Problem}
\paragraph{Dataset:} $X_1$ Gaussian distribution with mean $-20$ and unit variance, $X_2$ Gaussian distribution with mean $20$ and unit variance. Each dataset has $500$ samples.

\paragraph{Model:} Encoders consist of three fully connected layers with hidden layer size as $10$. Decoders are the reverse setup of the encoders. $P_z$ is a learnable one-dimensional mixture of Gaussian distribution with $2$ components and diagonal covariance matrix. The NVAUB has the added noise level of $10$ while the VAUB has no added noise. 

\subsection{Comparison Between Other Non-adversarial Bounds}
\paragraph{Dataset:} We adopted the preprocessed Adult Dataset from \cite{Zhao2020Conditional}, where the processed data has input dimensions $114$ , targeted attribute as \emph{income} and sensitive attribute as \emph{gender}. 

\paragraph{Model:} Since all baseline models have only one encoder, we also adapt our model to have shared encoders. All models have encoder consists of three fully connected layers with hidden layer size as $84$ and latent features as $60$. 
For \cite{moyer2018invariant} and ours, decoders are the reverse setup of the encoder. For \cite{gupta2021controllable}, we adapt the same network setup for the contrastive estimation model. 
Again, for \cite{moyer2018invariant} and \cite{gupta2021controllable} we used a fixed Gaussian distribution, and for our model, we use a learnable mixture of Gaussian distribution with $5$ components and diagonal covariance matrix.
For this experiment, we manually delete the classifier loss in all baseline models for the purpose of comparing only the bound performances.

\paragraph{Metric:} 
For SWD, we randomly project $10^3$ directions to one-dimensional vectors and compute the 1-Wasserstein distance between the projected vectors.
Here is the table for the corresponding mean and standard deviation.
The models are all significantly different (i.e., $p$-value is less than 0.01) when using an unpaired $t$-test on the $10^3$ SWD values for each method.

\begin{table}[ht]
\centering
\caption{SWD for each method where * denotes statistically different at a 99\% confidence level.}
\begin{tabular}{lccc}
\toprule
 & Moyer & Gupta & VAUB \\
\midrule
Sample Mean & 9.71* & 6.7* & 5.64* \\
$\sigma$ & 0.54 & 0.74 & 0.64 \\
\bottomrule
\end{tabular}
\end{table}

For SVM, we first split the test dataset with $80\%$ for training the SVM model and $20\%$ for evaluating the SVM model. We use the scikit-learn package to grid search over the logspace of the $C$ and $\gamma$ parameters to choose the best hyperparameters in terms of accuracy.

\subsection{Replacing Adversarial Losses}
\subsubsection{Replacing the Domain Adaption Objective}
\paragraph{Model:}
We use the same encoder setup (referred as feature extraction layers) and the same classifier structure as in \cite{ganin2016domain}).

\subsubsection{Replacing the Fairness Representation Objective}
\paragraph{Dataset:} We adopted the preprocessed Adult Dataset from \cite{Zhao2020Conditional}, where the processed data has input dimensions $114$ , targeted attribute as \emph{income} and sensitive attribute as \emph{gender}. 

\paragraph{Model:} Since all baseline models have only one encoder, we also adapt our model to have a shared encoder. All models have encoder consists of three fully connected layers with hidden layer size as $84$ and latent features as $60$. 
For \cite{moyer2018invariant} and ours, decoders are the reverse setup of the encoder. For \cite{gupta2021controllable}, we adapt the same network setup for the contrastive estimation model. 
Again, for \cite{moyer2018invariant} and \cite{gupta2021controllable} we used a fixed Gaussian distribution, and for our model, we use a learnable mixture of Gaussian distribution with $5$ components and diagonal covariance matrix.

\end{document}